\setlist[enumerate]{leftmargin=*}
\setlist[itemize]{leftmargin=*}
\setlist{nosep}
\def \gail#1 {{\color{purple} [Gail: #1] }}
\newcommand{\ours}{\textit{AdaWAC}\xspace}
\newcommand{\best}[1]{\textbf{#1}}
\title{\Large Adaptively Weighted Data Augmentation Consistency Regularization for Robust Optimization under Concept Shift}
\author[1]{Yijun Dong\thanks{Equal contribution. Correspondence to: \href{mailto:ydong@utexas.edu}{ydong@utexas.edu}}}
\author[2]{Yuege Xie$^*$\thanks{Work done at University of Texas at Austin.}}
\author[1]{Rachel Ward}
\affil[1]{University of Texas at Austin}
\affil[2]{Snap Inc.}
\begin{document}

\maketitle

\begin{abstract}
Concept shift is a prevailing problem in natural tasks like medical image segmentation where samples usually come from different subpopulations with variant correlations between features and labels. One common type of concept shift in medical image segmentation is the ``information imbalance'' between \emph{label-sparse} samples with few (if any) segmentation labels and \emph{label-dense} samples with plentiful labeled pixels.
Existing distributionally robust algorithms have focused on adaptively truncating/down-weighting the ``less informative'' (\ie, label-sparse in our context) samples.  
To exploit data features of label-sparse samples more efficiently, we propose an adaptively weighted online optimization algorithm --- \ours --- to incorporate data augmentation consistency regularization in sample reweighting. Our method introduces a set of trainable weights to balance the supervised loss and unsupervised consistency regularization of each sample separately. At the saddle point of the underlying objective, the weights assign label-dense samples to the supervised loss and label-sparse samples to the unsupervised consistency regularization.
We provide a convergence guarantee by recasting the optimization as online mirror descent on a saddle point problem. Our empirical results demonstrate that \ours not only enhances the segmentation performance and sample efficiency but also improves the robustness to concept shift on various medical image segmentation tasks with different UNet-style backbones.
\end{abstract}
\section{Introduction}
Modern machine learning is revolutionizing the field of medical imaging, especially in computer-aided diagnosis with computed tomography (CT) and magnetic resonance imaging (MRI) scans. 
However, classical learning objectives like empirical risk minimization (ERM) generally assume that training samples are independently and identically (\iid) distributed, whereas real-world medical image data rarely satisfy this assumption. 
\Cref{fig:synapse_case40_sparse_dense_ce_weights} instantiates a common observation in medical image segmentation where the segmentation labels corresponding to different cross-sections of the human body tend to have distinct proportions of labeled (\ie, non-background) pixels, which is accurately reflected by the evaluation of supervised cross-entropy loss during training. 
We refer to this as the ``information imbalance'' among samples, as opposed to the well-studied ``class imbalance''~\citep{wong2018segmentation,taghanaki2019combo,yeung2022unified} among the numbers of segmentation labels in different classes.
Such information imbalance induces distinct difficulty/paces of learning with the cross-entropy loss for different samples~\citep{wang2021survey, tullis2011effectiveness,tang2018attention,hacohen2019power}.
Specifically, we say a sample is \emph{label-sparse} when it contains very few (if any) segmentation labels; in contrast, a sample is \emph{label-dense} when its segmentation labels are prolific. Motivated by the information imbalance among samples, we explore the following questions:
\begin{center}
    \textit{
        What is the effect of separation between sparse and dense labels on segmentation?
        \\ 
        Can we leverage such information imbalance to improve the segmentation accuracy?
    }   
\end{center}

We formulate the mixture of label-sparse and label-dense samples as a concept shift --- a type of distribution shift in the conditional distribution of labels given features $P\rsep{\yb}{\xb}$. 
Coping with concept shifts, prior works have focused on adaptively truncating (hard-thresholding) the empirical loss associated with label-sparse samples.  These include the Trimmed Loss Estimator~\citep{shen2019learning}, MKL-SGD~\citep{pmlr-v108-shah20a}, Ordered SGD~\citep{kawaguchi2020ordered}, and the quantile-based Kacmarz algorithm~\citep{haddock_quantile-based_2020}. 
Alternatively, another line of works~\citep{wang_iterative_2018,sagawa2019distributionally} proposes to relax the hard-thresholding operation to soft-thresholding by down-weighting instead of truncating the less informative samples.  
However, diminishing sample weights reduces the importance of both the features and the labels simultaneously, which is still not ideal as the potentially valuable information in the features of the label-sparse samples may not be fully used.

\begin{figure}[ht]
    \centering
    \includegraphics[width=.8\linewidth]{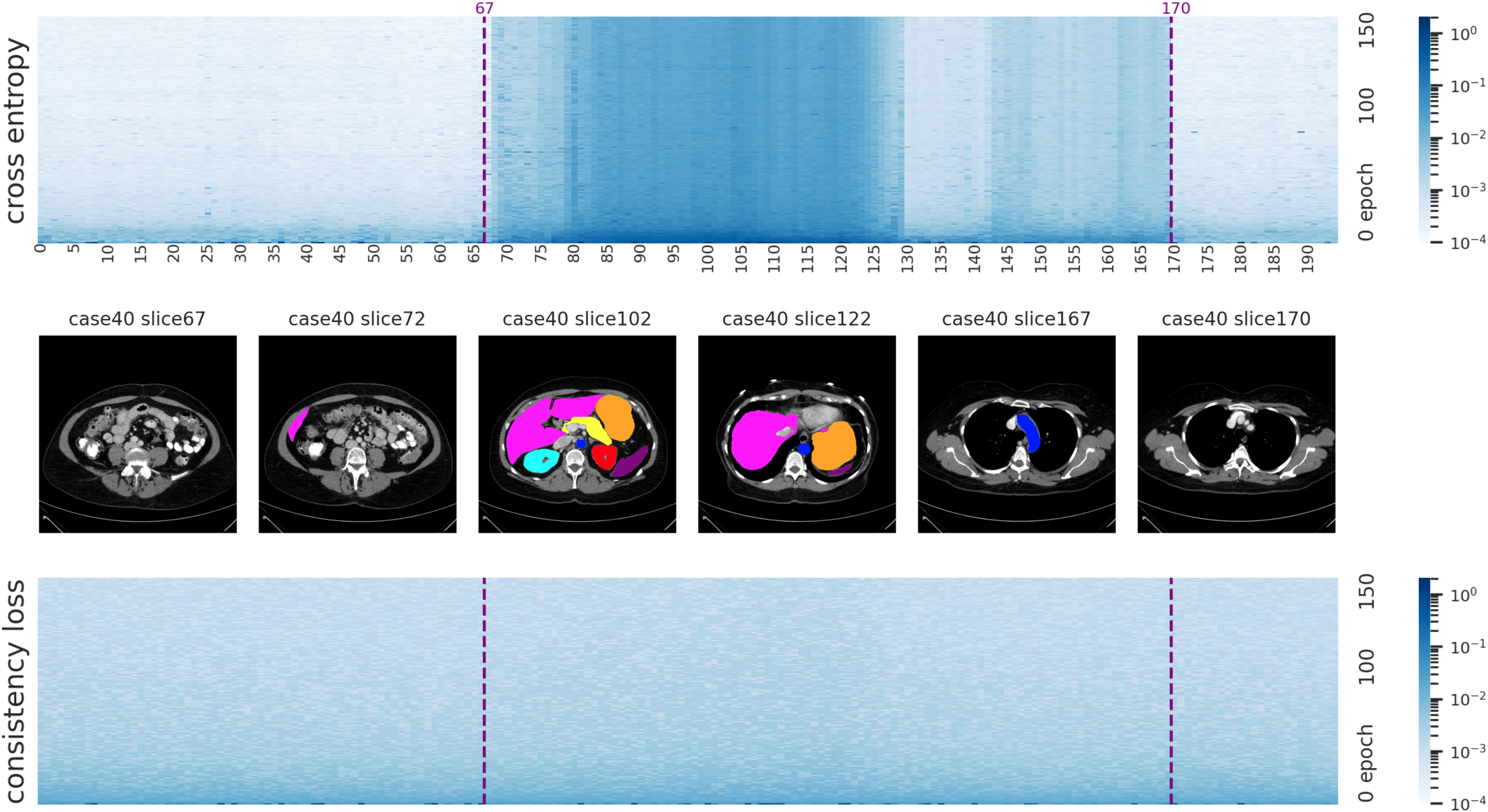}
    \caption{Evolution of cross-entropy losses versus consistency regularization terms for slices at different cross-sections of the human body in the Synapse dataset (described in \Cref{sec:experiments}) during training.}
    \label{fig:synapse_case40_sparse_dense_ce_weights}
\end{figure}

For further exploitation of the feature of training samples, we propose the incorporation of \emph{data augmentation consistency regularization} on label-sparse samples. 
As a prevalent strategy for utilizing unlabeled data, consistency regularization~\citep{bachman2014learning,laine2016temporal,sohn2020fixmatch} encourages data augmentations of the same samples to lie in the vicinity of each other on a proper manifold.  
For medical imaging segmentation, consistency regularization has been extensively studied in the semi-supervised learning setting~\citep{bortsova2019semi,zhao2019data,li2020transformation,wang2021deep,zhang2021multi,zhou2021ssmd,basak2022embarrassingly} as a strategy for overcoming label scarcity.
Nevertheless, unlike general vision tasks, for medical image segmentation, the scantiness of unlabeled image data can also be a problem due to regulations and privacy considerations~\cite{karimi2020deep}, which makes it worthwhile to reminisce the more classical supervised learning setting. 
In contrast to the aforementioned semi-supervised strategies, we  explore the potency of consistency regularization in the \emph{supervised learning} setting by leveraging the information in the features of label-sparse samples via data augmentation consistency regularization.

To naturally distinguish the label-sparse and label-dense samples, we make a key observation that the unsupervised consistency regularization on encoder layer outputs (of a UNet-style architecture) is much more uniform across different subpopulations than the supervised cross-entropy loss (as exemplified in \Cref{fig:synapse_case40_sparse_dense_ce_weights}).
Since the consistency regularization is characterized by the marginal distribution of features $P(\xb)$ but not labels and therefore is less affected by the concept shift in $P\rsep{\yb}{\xb}$, it serves as a natural reference for separating the label-sparse and label-dense samples. 
In light of this observation, we present the \emph{weighted data augmentation consistency (WAC) regularization} --- a minimax formulation that reweights the cross-entropy loss versus the consistency regularization associated with each sample via a set of trainable weights. 
At the saddle point of this minimax formulation, the WAC regularization automatically separates samples from different subpopulations by assigning all weights to the consistency regularization for label-sparse samples, and all weights to the cross-entropy terms for label-dense samples.

We further introduce an adaptively weighted online optimization algorithm, \ours, for solving the minimax problem posed by the WAC regularization, which is inspired by a mirror-descent-based algorithm for distributionally robust optimization~\citep{sagawa2019distributionally}. By adaptively learning the weights between the cross-entropy loss and consistency regularization of different samples, \ours comes with both a convergence guarantee and empirical success.

The main contributions are summarized as follows:
\begin{itemize}
    \item We introduce the \emph{WAC regularization} that leverages the consistency regularization on the encoder layer outputs (of a UNet-style architecture) as a natural reference to distinguish the label-sparse and label-dense samples (\Cref{sec:wac}). 
    \item We propose an adaptively weighted online optimization algorithm --- \ours --- for solving the WAC regularization problem with a convergence guarantee (\Cref{sec:ada_wac}). 
    \item Through extensive experiments on different medical image segmentation tasks with different UNet-style backbone architectures, we demonstrate the effectiveness of \ours not only for enhancing the segmentation performance and sample efficiency but also for improving the robustness to concept shift (\Cref{sec:experiments}).
\end{itemize}

\subsection{Related Work}

\paragraph{Sample reweighting.} 
Sample reweighting is a popular strategy for dealing with distribution/subpopulation shifts in training data where different weights are assigned to samples from different subpopulations. 
In particular, the distributionally-robust optimization (DRO) framework~\citep{bental2013robust,duchi2016statistics,duchi2018learning,sagawa2019distributionally} considers a collection of training sample groups from different distributions.  With the explicit grouping of samples, the goal is to minimize the worst-case loss over the groups.
Without prior knowledge of sample grouping, importance sampling~\citep{needell2014stochastic,zhao2015stochastic,alain2015variance,loshchilov2015online,gopal2016adaptive,katharopoulos2018not}, iterative trimming~\citep{kawaguchi2020ordered,shen2019learning}, and empirical-loss-based reweighting~\citep{Wu_Xie_Du_Ward_2022} are commonly incorporated in the stochastic optimization process for adaptive reweighting and separation of samples from different subpopulations.

\paragraph{Data augmentation consistency regularization.}
As a popular way of exploiting data augmentations, consistency regularization encourages models to learn the vicinity among augmentations of the same sample based on the assumption that data augmentations generally preserve the semantic information in data and therefore lie closely on proper manifolds. Beyond being a powerful building block in semi-supervised~\citep{bachman2014learning,sajjadi2016regularization,laine2016temporal,sohn2020fixmatch,berthelot2019mixmatch} and self-supervised~\citep{wu2018unsupervised,he2020momentum,chen2020simple,grill2020bootstrap} learning, the incorporation of data augmentation and consistency regularization also provably improves generalization and feature learning even in the supervised learning setting~\citep{yang2022sample,shen2022data}.

For medical imaging, data augmentation consistency regularization is generally leveraged as a semi-supervised learning tool~\citep{bortsova2019semi,zhao2019data,li2020transformation,wang2021deep,zhang2021multi,zhou2021ssmd,basak2022embarrassingly}. In efforts to incorporate consistency regularization in segmentation tasks with augmentation-sensitive labels, \cite{li2020transformation} encourages transformation consistency between predictions with augmentations applied to the image inputs and the segmentation outputs.
\cite{basak2022embarrassingly} penalizes inconsistent segmentation outputs between teacher-student models, with MixUp \citep{zhang2017mixup} applied to image inputs of the teacher model and segmentation outputs of the student model.
Instead of enforcing consistency in the segmentation output space as above, we leverage the insensitivity of sparse labels to augmentations and encourage consistent encodings (in the latent space of encoder outputs) on label-sparse samples.
\section{Problem Setup}\label{sec:problem_setup}

\paragraph{Notation.} 
For any $K \in \N$, we denote $\sbr{K} = \cbr{1,\dots,K}$. 
We represent the elements and subtensors of an arbitrary tensor by adapting the syntax for Python slicing on the subscript (except counting from $1$). For example, $\iloc{\xb}{i,j}$ denotes the $(i,j)$-entry of the two-dimensional tensor $\xb$, and $\iloc{\xb}{i,:}$ denotes the $i$-th row.
Let $\II$ be a function onto $\cbr{0,1}$ such that, for any event $e$, $\iffun{e}=1$ if $e$ is true and $0$ otherwise.
For any distribution $P$ and $n \in \N$, we let $P^n$ denote the joint distribution of $n$ samples drawn \iid from $P$.
Finally, we say that an event happens with high probability (\whp) if the event takes place with probability $1-\Omega\rbr{\poly\rbr{n}}^{-1}$.

\subsection{Pixel-wise Classification with Sparse and Dense Labels}\label{subsec:pixel-wise-classification}

We consider medical image segmentation as a pixel-wise multi-class classification problem where we aim to learn a pixel-wise classifier $h:\Xcal \to [K]^d$ that serves as a good approximation to the ground truth $h^*: \Xcal \to [K]^d$. 

Recall the separation of cross-entropy losses between samples with different proportions of background pixels from \Cref{fig:synapse_case40_sparse_dense_ce_weights}.
We refer to a sample $\rbr{\xb,\yb} \in \Xcal \times [K]^d$ as \emph{label-sparse} if most pixels in $\yb$ are labeled as background; for these samples, the cross-entropy loss on $\rbr{\xb,\yb}$ converges rapidly in the early stage of training.
Otherwise, we say that $\rbr{\xb,\yb}$ is \emph{label-dense}.
Formally, we describe such variation as a concept shift in the data distribution.
\begin{definition}[Mixture of label-sparse and label-dense subpopulations]
We assume that \emph{label-sparse and label-dense samples} are drawn from $P_0$ and $P_1$ with distinct conditional distributions $P_0\rsep{\yb}{\xb}$ and $P_1\rsep{\yb}{\xb}$ but common marginal distribution $P\rbr{\xb}$ such that $P_i\rbr{\xb,\yb} = P_i\rsep{\yb}{\xb} P\rbr{\xb}$ ($i=0,1$).
For $\xi \in [0,1]$, we define $P_\xi$ as a data distribution where $\rbr{\xb,\yb} \sim P_\xi$ is drawn either from $P_1$ with probability $\xi$ or from $P_0$ with probability $1-\xi$.
\end{definition}

We aim to learn a pixel-wise classifier from a function class $\Hcal$ where every $h_\theta \in \Hcal$ satisfies $\iloc{h_\theta\rbr{\xb}}{j} = \argmax_{k \in [K]} \iloc{f_\theta\rbr{\xb}}{j,:}$ for all $j \in [d]$, and the underlying function $f_\theta \in \Fcal$, parameterized by some $\theta \in \Fcal_\theta$, admits an encoder-decoder structure:
\begin{align}\label{eq:encoder_decoder}
    \Fcal \subseteq \csepp{f_\theta = \phi_\theta \circ \psi_\theta}{\phi_\theta: \Xcal \to \Zcal, \psi_\theta:\Zcal \to [0,1]^{d \times K}}.
\end{align}
Here $\phi_\theta, \psi_\theta$ correspond to the encoder and decoder functions, respectively. The parameter space $\Fcal_\theta$ is equipped with the norm $\nbr{\cdot}_\Fcal$ and its dual norm $\nbr{\cdot}_{\Fcal,*}$\footnote{
For \ours (\Cref{prop:convergence_srw_dac} in \Cref{sec:ada_wac}), $\Fcal_\theta$ is simply a subspace in the Euclidean space with dimension equal to the total number of parameters for each $\theta \in \Fcal_\theta$, with $\nbr{\cdot}_\Fcal$ and $\nbr{\cdot}_{\Fcal,*}$ both being the $\ell_2$-norm.
}. $\rbr{\Zcal, \varrho}$ is a latent metric space. 

To learn from segmentation labels, we consider the \emph{averaged cross-entropy loss}:
\begin{align}\label{eq:cross_entropy_loss}
\begin{split}
    \lossce\rbr{\theta;(\xb,\yb)} 
    = &-\frac{1}{d} \sum_{j=1}^d \sum_{k=1}^K \iffun{\iloc{\yb}{j}=k} \cdot \log\rbr{ \iloc{f_\theta\rbr{\xb}}{j,k} } \\
    = &-\frac{1}{d} \sum_{j=1}^d \log\rbr{\iloc{f_\theta\rbr{\xb}}{j, \iloc{\yb}{j}}}.
\end{split}
\end{align} 
We assume proper learning with $\theta^* \in \bigcap_{\xi \in [0,1]} \argmin_{\theta \in \Fcal_\theta} \E_{\rbr{\xb,\yb}\sim P_\xi}\sbr{\lossce\rbr{\theta;(\xb,\yb)} }$ being invariant with respect to $\xi$.\footnote{
We assume proper learning only to \begin{enumerate*}[label=(\roman*)]
    \item highlight the invariance of the desired ground truth to $\xi$ that can be challenging to learn with finite samples in practice and
    \item provide a natural pivot for the convex and compact neighborhood $\Fnb{\gamma}$ of ground truth $\theta^*$ in \Cref{ass:separability_sparse_dense} granted by the pretrained initialization, where $\theta^*$ can also be replaced with the pretrained initialization weights $\theta_0 \in \Fnb{\gamma}$.
    In particular, neither our theory nor the \ours algorithm requires the function class $\Fcal$ to be expressive enough to truly contain such $\theta^*$.
\end{enumerate*}
}

\subsection{Augmentation Consistency Regularization}
Despite the invariance of $f_{\theta^*}$ to $P_\xi$ on the population loss, with a finite number of training samples in practice, the predominance of label-sparse samples would be problematic. As an extreme scenario for the pixel-wise classifier with encoder-decoder structure (\Cref{eq:encoder_decoder}), when the label-sparse samples are predominant ($\xi \ll 1$), a decoder function $\psi_\theta$ that predicts every pixel as background can achieve near-optimal cross-entropy loss, regardless of the encoder function $\phi_\theta$, considerably compromising the test performance (\cf \Cref{tab:synapse_sample_eff}). 
To encourage legit encoding even in the absence of sufficient dense labels, we leverage the unsupervised consistency regularization on the \emph{encoder function} $\phi_\theta$ based on data augmentations. 

Let $\Acal$ be a distribution over transformations on $\Xcal$ where for any $\xb \in \Xcal$, each $A \sim \Acal$ ($A:\Xcal \to \Xcal$) induces an augmentation $A\rbr{\xb}$ of $\xb$ that perturbs low-level information in $\xb$. 
We aim to learn an encoder function $\phi_\theta:\Xcal \to \Zcal$ that is capable of filtering out low-level information from $\xb$ and therefore provides similar encodings for augmentations of the same sample. 
Recalling the metric $\varrho$ (\eg, the Euclidean distance) on $\Zcal$, for a given scaling hyperparameter $\lambdac>0$, we measure the similarity between augmentations with a consistency regularization term on $\phi_\theta\rbr{\cdot}$: for any $A_1,A_2 \sim \Acal^2$,
\begin{align}\label{eq:dac}
    \regdac\rbr{\theta;\xb,A_1,A_2} \dfeq \lambdac \cdot \varrho\Big( \phi_\theta\rbr{A_1(\xb)}, \phi_\theta\rbr{A_2(\xb)} \Big).
\end{align}

For the $n$ training samples $\cbr{\rbr{\xb_i, \yb_i}}_{i \in [n]}\sim P_\xi^n$, we consider $n$ pairs of data augmentation transformations $\cbr{\rbr{A_{i,1}, A_{i,2}}}_{i \in [n]} \sim \Acal^{2n}$.
In the basic version, we encourage the similar encoding $\phi_\theta\rbr{\cdot}$ of the augmentation pairs $\rbr{{A_{i,1}\rbr{\xb_i}}, {A_{i,2}\rbr{\xb_i}}}$ for all $i \in [n]$ via consistency regularization:
\begin{align}\label{eq:objective_dac_encoder_plain}
    \min_{\theta \in \Fnb{\gamma}} \frac{1}{n} \sum_{i=1}^n 
    \lossce\rbr{\theta;\rbr{\xb_i,\yb_i}} + 
    \regdac\rbr{\theta;\xb_i,A_{i,1},A_{i,2}}.
\end{align}

We enforce consistency on $\phi_\theta\rbr{\cdot}$ in light of the encoder-decoder architecture: the encoder is generally designed to abstract essential information and filters out low-level non-semantic perturbations (\eg, those introduced by augmentations), while the decoder recovers the low-level information for the pixel-wise classification.
Therefore, with different $A_1, A_2 \sim \Acal$, the encoder output $\phi_\theta\rbr{\cdot}$ tends to be more consistent than the other intermediate layers, especially for label-dense samples.

\section{Weighted Data Augmentation Consistency (WAC) Regularization}\label{sec:wac}

As the motivation, we begin with a key observation about the averaged cross-entropy:  
\begin{remark}[Separation of averaged cross-entropy loss on $P_0$ and $P_1$]\label{rmk:separation_average_cross_entropy}
    As demonstrated in \Cref{fig:synapse_case40_sparse_dense_ce_weights}, the sparse labels from $P_0$ tend to be much easier to learn than the dense ones from $P_1$, leading to considerable separation of averaged cross-entropy losses on the sparse and dense labels after a sufficient number of training epochs.  In other words, $\lossce\rbr{\theta;\rbr{\xb,\yb}} \ll \lossce\rbr{\theta;\rbr{\xb',\yb'}}$ for label-sparse samples $\rbr{\xb,\yb} \sim P_0$ and label-dense samples $\rbr{\xb',\yb'} \sim P_1$ with high probability.
\end{remark}

Although \Cref{eq:objective_dac_encoder_plain} with consistency regularization alone can boost the segmentation accuracy during testing (\cf \Cref{tab:ablation}), it does not take the separation between label-sparse and label-dense samples into account. In \Cref{sec:experiments}, we will empirically demonstrate that proper exploitation of such separation, like the formulation introduced below, can lead to improved classification performance.

We formalize the notion of separation between $P_0$ and $P_1$ with the consistency regularization (\Cref{eq:dac}) as a reference in the following assumption
\footnote{
Although \Cref{ass:separability_sparse_dense} may seem to be rather strong, it is only required for the separation guarantee of label-sparse and label-dense samples with high probability in \Cref{prop:spontaneous_separation_sparse_dense}, but not for the adaptive weighting algorithm introduced in \Cref{sec:ada_wac} or in practice for the experiments.
}.

\begin{assumption}[$n$-separation between $P_0$ and $P_1$]\label{ass:separability_sparse_dense}
    Given a sufficiently small $\gamma>0$, let $\Fnb{\gamma} = \csepp{\theta \in \Fcal_\theta}{\nbr{\theta - \theta^*}_{\Fcal} \le \gamma}$ be a compact and convex neighborhood of well-trained pixel-wise classifiers\footnote{With pretrained initialization, we assume that the optimization algorithm is always probing in $\Fnb{\gamma}$.}. We say that \emph{$P_0$ and $P_1$ are $n$-separated over $\Fnb{\gamma}$} if there exists $\omega>0$ such that with probability $1-\Omega\rbr{n^{1+\omega}}^{-1}$ over $\rbr{\rbr{\xb,\yb},\rbr{A_1,A_2}} \sim P_\xi \times \Acal^2$, the following hold:
    \begin{enumerate}[label=(\roman*)]
        \item $\lossce\rbr{\theta;\rbr{\xb,\yb}} < \regdac\rbr{\theta;\xb,A_1,A_2}$ for all $\theta \in \Fnb{\gamma}$ given $\rbr{\xb,\yb} \sim P_0$;
        \item $\lossce\rbr{\theta;\rbr{\xb,\yb}} > \regdac\rbr{\theta;\xb,A_1,A_2}$ for all $\theta \in \Fnb{\gamma}$ given $\rbr{\xb,\yb} \sim P_1$.
    \end{enumerate}
\end{assumption}
This assumption is motivated by the empirical observation that the perturbation in $\phi_\theta\rbr{\cdot}$ induced by $\Acal$ is more uniform across $P_0$ and $P_1$ than the averaged cross-entropy losses, as instantiated in \Cref{fig:synapse-weights}. 

Under \Cref{ass:separability_sparse_dense}, up to a proper scaling hyperparameter $\lambdac$, the consistency regularization (\Cref{eq:dac}) can separate the averaged cross-entropy loss (\Cref{eq:cross_entropy_loss}) on $n$ label-sparse and label-dense samples with probability $1-\Omega\rbr{n^{\omega}}^{-1}$ (as explained formally in \Cref{subapx:spontaneous_separation_sparse_dense}). 
In particular, the larger $n$ corresponds to the stronger separation between $P_0$ and $P_1$. 

With \Cref{ass:separability_sparse_dense}, we introduce a minimax formulation that incentivizes the separation of label-sparse and label-dense samples automatically 
by introducing a flexible weight $\iloc{\betab}{i} \in [0,1]$ that balances $\lossce\rbr{\theta;\rbr{\xb_i,\yb_i}}$ and $\regdac\rbr{\theta;\xb_i,A_{i,1},A_{i,2}}$ for each of the $n$ samples.
\begin{align}\label{eq:objective_dac_encoder_weighted}
\begin{split}
    &\wh\theta^\wdac, \wh\betab \in 
    \argmin_{\theta \in \Fnb{\gamma}}~
    \argmax_{\betab \in [0,1]^n}~ 
    \wh L^\wdac\rbr{\theta,\betab} 
    \\
    &\wh L^\wdac\rbr{\theta,\betab} \dfeq \frac{1}{n} \sum_{i=1}^n \iloc{\betab}{i} \cdot \lossce\rbr{\theta;\rbr{\xb_i,\yb_i}} + (1-\iloc{\betab}{i}) \cdot \regdac\rbr{\theta;\xb_i,A_{i,1},A_{i,2}}. 
\end{split}
\end{align}

With convex and continuous loss and regularization terms (formally in \Cref{prop:spontaneous_separation_sparse_dense}), \Cref{eq:objective_dac_encoder_weighted} admits a saddle point corresponding to $\wh\betab$ which separates the label-sparse and label-dense samples under \Cref{ass:separability_sparse_dense}.
\begin{proposition}[Formal proof in \Cref{subapx:spontaneous_separation_sparse_dense}]\label{prop:spontaneous_separation_sparse_dense}
    Assume the convexity and continuity in $\theta$ of both
    $\lossce\rbr{\theta;\rbr{\xb,\yb}}$ and $\regdac\rbr{\theta;\xb,A_{1},A_{2}}$ for all $(\xb,\yb) \in \Xcal \times [K]^d$ and $A_1,A_2 \sim \Acal^2$; $\Fnb{\gamma} \subset \Fcal_\theta$ is compact and convex.
    If $P_0$ and $P_1$ are $n$-separated (\Cref{ass:separability_sparse_dense}), then there exists $\wh\betab \in \cbr{0,1}^n$ and $\wh\theta^\wdac \in \argmin_{\theta \in \Fnb{\gamma}} \wh L^\wdac\rbr{\theta, \wh\betab}$ such that 
    \begin{align}\label{eq:saddle_point_def}
    \begin{split}
        \min_{\theta \in \Fnb{\gamma}} \wh L^\wdac\rbr{\theta, \wh\betab} 
        = \wh L^\wdac\rbr{\wh\theta^\wdac, \wh\betab} 
        = \max_{\betab \in [0,1]^n} \wh L^\wdac\rbr{\wh\theta^\wdac, \betab}.
    \end{split}
    \end{align}
    Further, $\wh\betab$ separates the label-sparse and label-dense samples --- $\iloc{\wh\betab}{i}=\iffun{\rbr{\xb_i,\yb_i} \sim P_1}$ --- \whp. 
\end{proposition}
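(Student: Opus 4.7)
The plan is to establish the saddle point via Sion's minimax theorem, to read off a bang--bang structure on $\wh\betab$ from the separable affine form of the inner problem, and then to identify each coordinate with its label subpopulation via \Cref{ass:separability_sparse_dense} combined with a union bound over the $n$ training samples.

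First, I would verify the hypotheses of Sion's minimax theorem. For any fixed $\betab \in [0,1]^n$, the map $\theta \mapsto \wh L^\wdac\rbr{\theta,\betab}$ is a convex combination (with coefficients $\iloc{\betab}{i}/n$ and $(1-\iloc{\betab}{i})/n$) of the convex continuous functions $\lossce\rbr{\theta;\rbr{\xb_i,\yb_i}}$ and $\regdac\rbr{\theta;\xb_i,A_{i,1},A_{i,2}}$, hence convex and continuous on the nonempty compact convex set $\Fnb{\gamma}$. For any fixed $\theta \in \Fnb{\gamma}$, the map $\betab \mapsto \wh L^\wdac\rbr{\theta,\betab}$ is affine in $\betab$, hence concave and continuous on the compact convex set $[0,1]^n$. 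Sion's theorem therefore yields a saddle point $\rbr{\wh\theta^\wdac, \wh\betab} \in \Fnb{\gamma} \times [0,1]^n$ for which the two identities in \Cref{eq:saddle_point_def} hold.

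Second, I would pin down the structure of $\wh\betab$. The inner problem $\max_{\betab \in [0,1]^n} \wh L^\wdac\rbr{\wh\theta^\wdac,\betab}$ decouples across coordinates and is affine in each $\iloc{\betab}{i}$ with slope
\[
\Delta_i\rbr{\wh\theta^\wdac} \dfeq \lossce\rbr{\wh\theta^\wdac;\rbr{\xb_i,\yb_i}} - \regdac\rbr{\wh\theta^\wdac;\xb_i,A_{i,1},A_{i,2}}.
\]
Hence the coordinate-wise maximizer can be chosen as $\iloc{\wh\betab}{i} = \iffun{\Delta_i\rbr{\wh\theta^\wdac} > 0}$ (with any tie-breaking convention in $\cbr{0,1}$ when $\Delta_i=0$), which already puts $\wh\betab \in \cbr{0,1}^n$.

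Third, I would translate the sign of $\Delta_i\rbr{\wh\theta^\wdac}$ into a statement about label sparsity using \Cref{ass:separability_sparse_dense}. The crucial feature of that assumption is that the strict inequality is \emph{uniform} over every $\theta \in \Fnb{\gamma}$: thus, whatever the data-dependent minimizer $\wh\theta^\wdac \in \Fnb{\gamma}$ turns out to be, conditionally on $\rbr{\xb_i,\yb_i} \sim P_0$ one has $\Delta_i\rbr{\wh\theta^\wdac} < 0$ with probability at least $1 - \Omega\rbr{n^{1+\omega}}^{-1}$, and conditionally on $\rbr{\xb_i,\yb_i} \sim P_1$ the reversed strict inequality holds with the same probability bound. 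A union bound over the $n$ independent samples then yields $\iloc{\wh\betab}{i} = \iffun{\rbr{\xb_i,\yb_i} \sim P_1}$ simultaneously for all $i \in [n]$ with probability at least $1 - n \cdot \Omega\rbr{n^{1+\omega}}^{-1} = 1 - \Omega\rbr{n^\omega}^{-1}$, which is the claimed \whp guarantee.

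The only delicate point is the order of quantifiers, and the proof is clean precisely because \Cref{ass:separability_sparse_dense} is phrased uniformly over $\theta \in \Fnb{\gamma}$: this lets us plug in the random saddle-point parameter $\wh\theta^\wdac$ without any covering, Lipschitz, or uniform concentration argument. If the assumption instead controlled the sign of $\lossce - \regdac$ only at a fixed reference such as $\theta^*$, one would need to transport the separation from $\theta^*$ to $\wh\theta^\wdac$, which would be the genuinely hard step; as stated, however, that step is dispensed with by design of the assumption, and the remainder of the argument reduces to Sion's theorem, linearity in $\betab$, and a union bound.
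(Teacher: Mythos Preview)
Your proposal is correct and follows essentially the same route as the paper's proof: Sion's minimax theorem for existence of a saddle point, the affine/separable structure in $\betab$ to place $\wh\betab$ at a vertex of $[0,1]^n$, and then \Cref{ass:separability_sparse_dense} with a union bound over the $n$ samples to identify $\iloc{\wh\betab}{i}$ with $\iffun{(\xb_i,\yb_i)\sim P_1}$. The only cosmetic differences are that the paper spells out the passage from the minimax equality to an attained saddle point via Weierstrass' theorem and the standard saddle-point characterization (whereas you fold this into ``Sion yields a saddle point''), and the paper phrases the vertex argument as a short contradiction rather than your direct observation; both are equivalent.
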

That is, for $n$ samples drawn from a mixture of $n$-separated $P_0$ and $P_1$, the saddle point of $L^\wdac_i \rbr{\theta,\betab}$ in \Cref{eq:objective_dac_encoder_weighted} corresponds to $\iloc{\betab}{i} = 0$ on label-sparse samples (\ie, learning from the unsupervised consistency regularization), and $\iloc{\betab}{i} = 1$ on label-dense samples (\ie, learning from the supervised averaged cross-entropy loss).

\begin{remark}[Connection to hard-thresholding algorithms]\label{rmk:relation_hard_thresholding}
    The saddle point of \Cref{eq:objective_dac_encoder_weighted} is closely related to hard-thresholding algorithms like Ordered SGD~\citep{kawaguchi2020ordered} and iterative trimmed loss~\citep{shen2019learning}. 
    In each iteration, these algorithms update the model only on a proper subset of training samples based on the (ranking of) current empirical risks. 
    Compared to hard-thresholding algorithms,  
    \begin{enumerate*}[label=(\roman*)]
        \item \Cref{eq:objective_dac_encoder_weighted} additionally leverages the unused samples (\eg, label-sparse samples) for unsupervised consistency regularization on data augmentations;
        \item meanwhile, it does not require prior knowledge of the sample subpopulations (\eg, $\xi$ for $P_\xi$) which is essential for hard-thresholding algorithms. 
    \end{enumerate*}

    \Cref{eq:objective_dac_encoder_weighted} further facilitates the more flexible optimization process. As we will empirically show in \Cref{tab:trim}, despite the close relation between \Cref{eq:objective_dac_encoder_weighted} and the hard-thresholding algorithms (\Cref{rmk:relation_hard_thresholding}), such updating strategies may be suboptimal for solving \Cref{eq:objective_dac_encoder_weighted}.
\end{remark}

\section{Adaptively Weighted Data Augmentation Consistency Regularization (\ours)}\label{sec:ada_wac}

Inspired by the breakthrough made by \cite{sagawa2019distributionally} in the distributionally-robust optimization (DRO) setting where gradient updating on weights is shown to enjoy better convergence guarantees than hard thresholding, we introduce an adaptively weighted online optimization algorithm (\Cref{alg:srw_dac}) for solving \Cref{eq:objective_dac_encoder_weighted} based on online mirror descent.

In contrast to the commonly used stochastic gradient descent (SGD), the flexibility of online mirror descent in choosing the associated norm space not only allows gradient updates on sample weights but also grants distinct learning dynamics to sample weights $\betab_t$ and model parameters $\theta_t$, which leads to the following convergence guarantee.
\begin{proposition}[Formally in \Cref{prop:convergence_srw_dac_formal}, proof in \Cref{subapx:convergence_wdac}, assumptions instantiated in \Cref{ex:convex_continuous}]\label{prop:convergence_srw_dac}
    Assume that 
    $\lossce\rbr{\theta;\rbr{\xb,\yb}}$ and $\regdac\rbr{\theta;\xb,A_{1},A_{2}}$ are convex and continuous in $\theta$ for all $(\xb,\yb) \in \Xcal \times [K]^d$ and $A_1,A_2 \sim \Acal^2$.  Assume moreover that  $\Fnb{\gamma} \subset \Fcal_\theta$ is convex and compact. If there exist
    \footnote{
        Following the convention, we use $*$ in subscription to denote the dual spaces. For instance, recalling the parameter space $\Fcal_\theta$ characterized by the norm $\nbr{\cdot}_{\Fcal}$ from \Cref{subsec:pixel-wise-classification}, we use $\nbr{\cdot}_{\Fcal,*}$ to denote its dual norm; while $C_{\theta,*}, C_{\betab,*}$ upper bound the dual norms of the gradients with respect to $\theta$ and $\betab$.
    } 
    $C_{\theta,*} > 0$ and $C_{\betab,*} > 0$ such that 
    \begin{align*}
        &\frac{1}{n} \sum_{i=1}^n \nbr{\nabla_\theta \wh L_i^\wdac\rbr{\theta,\betab}}_{\Fcal,*}^2 \le C_{\theta,*}^2 \\
        &\frac{1}{n} \sum_{i=1}^n \max\cbr{\lossce\rbr{\theta;\rbr{\xb_i,\yb_i}}, \regdac\rbr{\theta;\xb_i,A_{i,1},A_{i,2}}}^2 
        \le C_{\betab,*}^2
    \end{align*}
    for all $\theta \in \Fnb{\gamma}$, $\betab \in [0,1]^n$,
    then with $\eta_\theta = \eta_{\betab} = \frac{2}{\sqrt{5T \rbr{\gamma^2 C_{\theta,*}^2 + 2 n C_{\betab,*}^2}}}$, \Cref{alg:srw_dac} provides
    \begin{align*}
        \E\sbr{\max_{\betab \in [0,1]^n} \wh L^\wdac\rbr{\overline{\theta}_T, \betab} - \min_{\theta \in \Fnb{\gamma}} \wh L^\wdac\rbr{\theta, \overline{\betab}_T}}
        \le 2 \sqrt{\frac{5 \rbr{\gamma^2 C_{\theta,*}^2 + 2 n C_{\betab,*}^2}}{T}}
    \end{align*}
    where $\overline{\theta}_T = \frac{1}{T} \sum_{t=1}^T \theta_t$ and $\overline{\betab}_T = \frac{1}{T} \sum_{t=1}^T \betab_t$. 
\end{proposition}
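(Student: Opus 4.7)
The plan is to view Algorithm~\ref{alg:srw_dac} as stochastic mirror descent (SMD) applied to the convex--concave saddle-point problem $\min_{\theta \in \Fnb{\gamma}} \max_{\betab \in [0,1]^n} \wh L^\wdac(\theta,\betab)$, and then to invoke a standard primal--dual duality-gap bound for averaged iterates. Convexity in $\theta$ is assumed; linearity (hence concavity) in $\betab$ follows by inspection of $\wh L^\wdac_i$, so the saddle point exists (as already exploited in \Cref{prop:spontaneous_separation_sparse_dense}) and the min--max equals the max--min along the Pareto front.

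The first step is to identify the two mirror maps. On the primal side, $\Fnb{\gamma}$ is a convex compact subset of Euclidean space with $\nbr{\cdot}_\Fcal = \nbr{\cdot}_2$, so the squared-$\ell_2$ Bregman divergence is the natural choice; this makes line~9 of Algorithm~\ref{alg:srw_dac} a (stochastic) projected gradient step for $\theta$, with primal Bregman radius at most $\tfrac{1}{2}(2\gamma)^2 = 2\gamma^2$ since $\Fnb{\gamma}$ has diameter $2\gamma$. On the dual side, writing each weight as a point $(\iloc{\betab}{i}, 1-\iloc{\betab}{i})$ on the 2-simplex and using the negative-entropy mirror map on each factor turns lines~5--8 into the exponentiated-gradient update on the sampled 2-simplex. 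Because $\wh L^\wdac(\theta,\betab) = \tfrac{1}{n}\sum_i \wh L^\wdac_i(\theta,\betab)$ is separable in the $\iloc{\betab}{i}$'s, the randomized block-coordinate step on coordinate $i_t$ is an unbiased stochastic mirror step for the whole $\betab$; the product-simplex Bregman radius from the initialization $\betab_0 = \b{1}/2$ is at most $n \log 2 \le n$.

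The second step is to verify unbiasedness and to bound second moments of the stochastic gradients. For $i_t \sim \mathrm{Unif}[n]$, one gets $\E\!\left[\nabla_\theta \wh L^\wdac_{i_t}(\theta,\betab)\right] = \nabla_\theta \wh L^\wdac(\theta,\betab)$ and $\E\!\left[\nbr{\nabla_\theta \wh L^\wdac_{i_t}}_{\Fcal,*}^2\right] = \tfrac{1}{n}\sum_i \nbr{\nabla_\theta \wh L^\wdac_i}_{\Fcal,*}^2 \le C_{\theta,*}^2$ by hypothesis~(i). Similarly, the $\betab$-gradient at the sampled coordinate has entries $\lossce(\theta;(\xb_{i_t},\yb_{i_t}))$ and $\regdac(\theta;\xb_{i_t},A_{i_t,1},A_{i_t,2})$, with $\ell_\infty$-dual-norm squared bounded by $\max\{\lossce,\regdac\}^2$, so by hypothesis~(ii) the per-step variance on the product simplex is at most $2 n C_{\betab,*}^2$ after accounting for the $1/n$ sampling probability.

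The third step is to apply the standard SMD regret bound on each factor, add the two, and use convex--concavity to convert per-iterate regret into the duality gap at $(\overline\theta_T, \overline\betab_T)$. Concretely, for any comparator $(\theta^\star,\betab^\star)$,
\begin{align*}
\sum_{t=1}^T \E\!\left[\wh L^\wdac(\theta_t,\betab^\star) - \wh L^\wdac(\theta^\star,\betab_t)\right]
\;\le\; \frac{2\gamma^2}{\eta_\theta} + \frac{n}{\eta_{\betab}} + \frac{\eta_\theta T\, C_{\theta,*}^2}{2} + \eta_{\betab} T\, n\, C_{\betab,*}^2 .
\end{align*}
Choosing $\theta^\star \in \argmin_\theta \wh L^\wdac(\theta,\overline\betab_T)$ and $\betab^\star \in \argmax_\betab \wh L^\wdac(\overline\theta_T,\betab)$ and applying Jensen plus the saddle-point inequality to the LHS bounds the duality gap at the averaged iterates by $1/T$ times the RHS. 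Substituting the stated $\eta_\theta = \eta_{\betab} = 2/\sqrt{5T(\gamma^2 C_{\theta,*}^2 + 2 n C_{\betab,*}^2)}$ and simplifying gives the advertised rate $2\sqrt{5(\gamma^2 C_{\theta,*}^2 + 2 n C_{\betab,*}^2)/T}$.

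The main obstacle I expect is the careful variance accounting for the block-coordinate multiplicative step on $\betab$: one must verify that updating only the sampled 2-simplex at coordinate $i_t$ (as the algorithm does) is equivalent in expectation to a full stochastic entropy mirror step on the product of $n$ 2-simplices, and that this equivalence produces the factor of $n$ that combines with $C_{\betab,*}^2$. Separability of $\wh L^\wdac$ in $\betab$ makes this decoupling clean, but the constants must be tracked carefully in order to land exactly on $2\sqrt{5(\cdot)/T}$ rather than a loose variant. The rest is bookkeeping: optimizing over the learning rates and invoking the standard regret lemma for mirror descent with bounded-variance stochastic subgradients.
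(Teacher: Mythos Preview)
Your plan is essentially the paper's own proof: reformulate on the product of $n$ two-simplices, use the Euclidean mirror map for $\theta$ and the negative-entropy mirror map for $\betab$, verify the unbiasedness and second-moment bounds on the stochastic gradients, and invoke the Nemirovski-type saddle-point SMD bound. The one place where you should tighten is the $\theta$-radius: the paper centers the mirror map at $\theta^*$, namely $\varphi_\theta(\theta)=\tfrac{1}{2}\nbr{\theta-\theta^*}_\Fcal^2$, so that $R_{\Fnb{\gamma}}^2=\max\varphi_\theta-\min\varphi_\theta\le \gamma^2/2$ rather than your diameter-based $2\gamma^2$; this factor of four is exactly what is needed to land on the stated constant, and on the $\betab$-side the paper's $\nbr{\cdot}_{1,2,*}$ accounting gives the second moment $\le C_{\betab,*}^2$ (not $2nC_{\betab,*}^2$) with $R_{\Delta_2^n}^2\le n$, so the $2n$ in the final bound comes from $2R_\Vcal^2$ in Nemirovski's lemma rather than from the variance.
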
 

\begin{algorithm}[h]
\caption{\ours}\label{alg:srw_dac}
\begin{algorithmic}
    \STATE {\bfseries Input:} 
    Training samples $\cbr{\rbr{\xb_i, \yb_i}}_{i \in [n]}\sim P_\xi^n$, 
    augmentations $\cbr{\rbr{A_{i,1}, A_{i,2}}}_{i \in [n]} \sim \Acal^{2n}$,
    maximum number of iterations $T \in \N$, 
    learning rates $\eta_\theta, \eta_{\betab} > 0$, 
    pretrained initialization for the pixel-wise classifier $\theta_0 \in \Fnb{\gamma}$.
    
    \STATE Initialize the sample weights $\betab_0 = \b{1}/2 \in [0,1]^n$.
    
    \FOR{$t = 1,\dots,T$}
        \STATE Sample $i_t \sim [n]$ uniformly
        \STATE $\bb \gets \sbr{\iloc{\rbr{\betab_{t-1}}}{i_t}, 1-\iloc{\rbr{\betab_{t-1}}}{i_t}}$
        \STATE $\iloc{\bb}{1} \gets \iloc{\bb}{1} \cdot \exp\rbr{\eta_{\betab} \cdot \lossce\rbr{\theta_{t-1};\rbr{\xb_{i_t},\yb_{i_t}}}}$
        \STATE $\iloc{\bb}{2} \gets \iloc{\bb}{2} \cdot \exp\rbr{ \eta_{\betab} \cdot \regdac\rbr{\theta_{t-1};\xb_{i_t},A_{i_t,1},A_{i_t,2}} }$
        \STATE $\betab_t \gets \betab_{t-1}$, $\iloc{\rbr{\betab_t}}{i_t} \gets \iloc{\bb}{1}/\nbr{\bb}_1$
        \STATE $\theta_t \gets \theta_{t-1} - \eta_\theta \cdot \Big(
        \iloc{\rbr{\betab_t}}{i_t} \cdot \nabla_\theta\lossce\rbr{\theta_{t-1};\rbr{\xb_{i_t},\yb_{i_t}}} + \rbr{1-\iloc{\rbr{\betab_t}}{i_t}} \cdot \nabla_\theta\regdac\rbr{\theta_{t-1};\xb_{i_t},A_{i_t,1},A_{i_t,2}} \Big)$
    \ENDFOR
\end{algorithmic}
\end{algorithm}

In addition to the convergence guarantee, \Cref{alg:srw_dac} also demonstrates superior performance over hard-thresholding algorithms for segmentation problems in practice (\Cref{tab:trim}).
An intuitive explanation is that instead of filtering out all the label-sparse samples via hard thresholding, the adaptive weighting allows the model to learn from some sparse labels at the early epochs, while smoothly down-weighting $\lossce$ of these samples since learning sparse labels tends to be easier (\Cref{rmk:separation_average_cross_entropy}). 
With the learned model tested on a mixture of label-sparse and label-dense samples, learning sparse labels at the early stage is crucial for accurate segmentation.

\section{Experiments}\label{sec:experiments}
In this section, we investigate the proposed \ours algorithm (\Cref{alg:srw_dac}) on different medical image segmentation tasks with different UNet-style architectures. We first demonstrate the performance improvements brought by \ours in terms of sample efficiency and robustness to concept shift (\Cref{tab:synapse_sample_eff}). Then, we verify the empirical advantage of \ours compared to the closely related hard-thresholding algorithms as discussed in \Cref{rmk:relation_hard_thresholding} (\Cref{tab:trim}). Our ablation study (\Cref{tab:ablation}) further illustrates the indispensability of both sample reweighting and consistency regularization, the deliberate combination of which leads to the superior performance of \ours\footnote{We release our code at \href{https://github.com/gail-yxie/adawac}{https://github.com/gail-yxie/adawac}.}.

\paragraph{Experiment setup.} 
We conduct experiments on two medical image segmentation tasks: abdominal CT segmentation for Synapse multi-organ dataset (Synapse)\footnote{\href{https://www.synapse.org/\#!Synapse:syn3193805/wiki/217789}{{https://www.synapse.org/\#!Synapse:syn3193805/wiki/217789}}} and cine-MRI segmentation for Automated cardiac diagnosis challenge dataset (ACDC)\footnote{\href{https://www.creatis.insa-lyon.fr/Challenge/acdc/}{https://www.creatis.insa-lyon.fr/Challenge/acdc/}}, with two UNet-like architectures: TransUNet~\citep{chen2021transunet} and UNet~\cite{ronneberger2015unet} (deferred to \Cref{subsec:exp_unet}). For the main experiments with TransUNet in \Cref{sec:experiments}, we follow the official implementation in \citep{chen2021transunet} and use ERM+SGD as the baseline. We evaluate segmentations with two standard metrics---the average Dice-similarity coefficient (DSC) and the average $95$-percentile of Hausdorff distance (HD95). Dataset and implementation details are deferred to \Cref{app:imp}. Given the sensitivity of medical image semantics to perturbations, our experiments only involve simple augmentations (\ie, rotation and mirroring) adapted from \citep{chen2021transunet}.

It is worth highlighting that, in addition to the information imbalance among samples caused by the concept shift discussed in this work, the pixel-wise class imbalance (\eg, the predominance of background pixels) is another well-investigated challenge for medical image segmentation, where coupling the dice loss~\citep{wong2018segmentation,taghanaki2019combo,yeung2022unified} in the objective is a common remedy used in many state-of-the-art methods~\citep{chen2021transunet,cao2021swin}. The implementation of \ours also leverages the dice loss to alleviate pixel-wise class imbalance. We defer the detailed discussion to \Cref{apx:dice}.

\subsection{Segmentation Performance of \ours with TransUNet}\label{subsec:exp_transunet}
\paragraph{Segmentation on Synapse.}
\Cref{fig:synapse} visualizes the segmentation predictions on $6$ Synapse test slices given by models trained via \ours(ours) and via the baseline (ERM+SGD) with TransUNet~\citep{chen2021transunet}. We observe that \ours provides more accurate predictions on the segmentation boundaries and captures small organs better than the baseline.

\begin{figure}[ht]
    \centering
    \includegraphics[width=.75\linewidth]{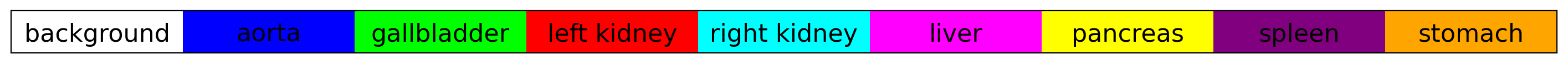}
    \includegraphics[width=.8\linewidth]{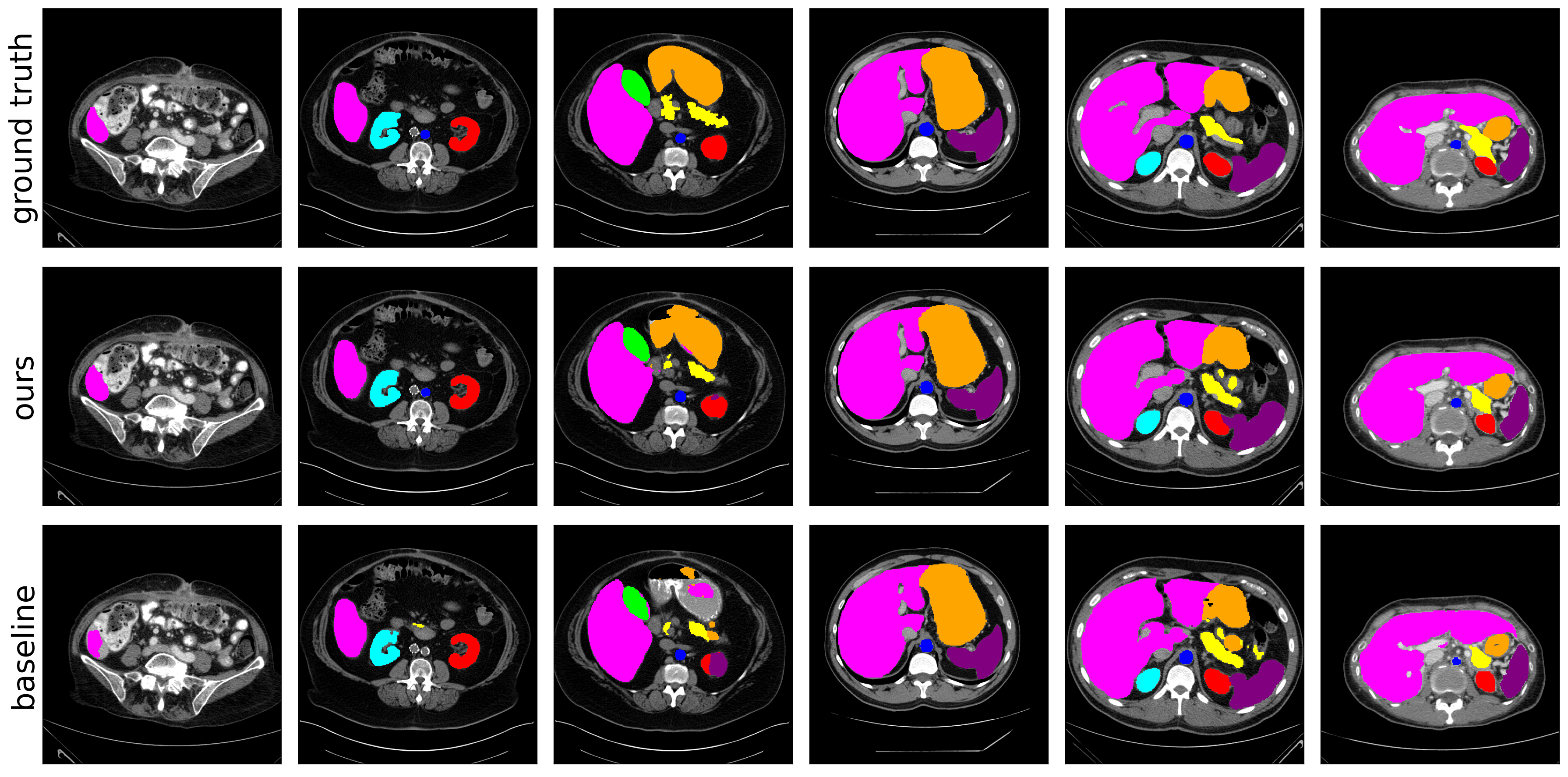}
    \caption{Visualization of segmentation predictions with TransUNet~\citep{chen2021transunet} on Synapse. Top to bottom: ground truth, ours (\ours), baseline.}
    \label{fig:synapse}
\end{figure}

\paragraph{Visualization of \ours.} 
As shown in \Cref{fig:synapse-weights}, with $\lossce\rbr{\theta_t;(\xb_i,\yb_i)}$ (\Cref{eq:cross_entropy_loss}) of label-sparse versus label-dense slices weakly separated in the early epochs, the model further learns to distinguish $\lossce\rbr{\theta_t;(\xb_i,\yb_i)}$ of label-sparse/label-dense slices during training. By contrast, $\regdac\rbr{\theta_t;\xb_i,A_{i,1},A_{i,2}}$ (\Cref{eq:dac}) remains mixed for all slices throughout the entire training process. As a result, the CE weights of label-sparse slices are much smaller than those of label-dense ones, pushing \ours to learn more image representations but fewer pixel classifications for slices with sparse labels and learn more pixel classifications for slices with dense labels.

\begin{figure}[ht]
    \centering
    \includegraphics[width=.8\linewidth]{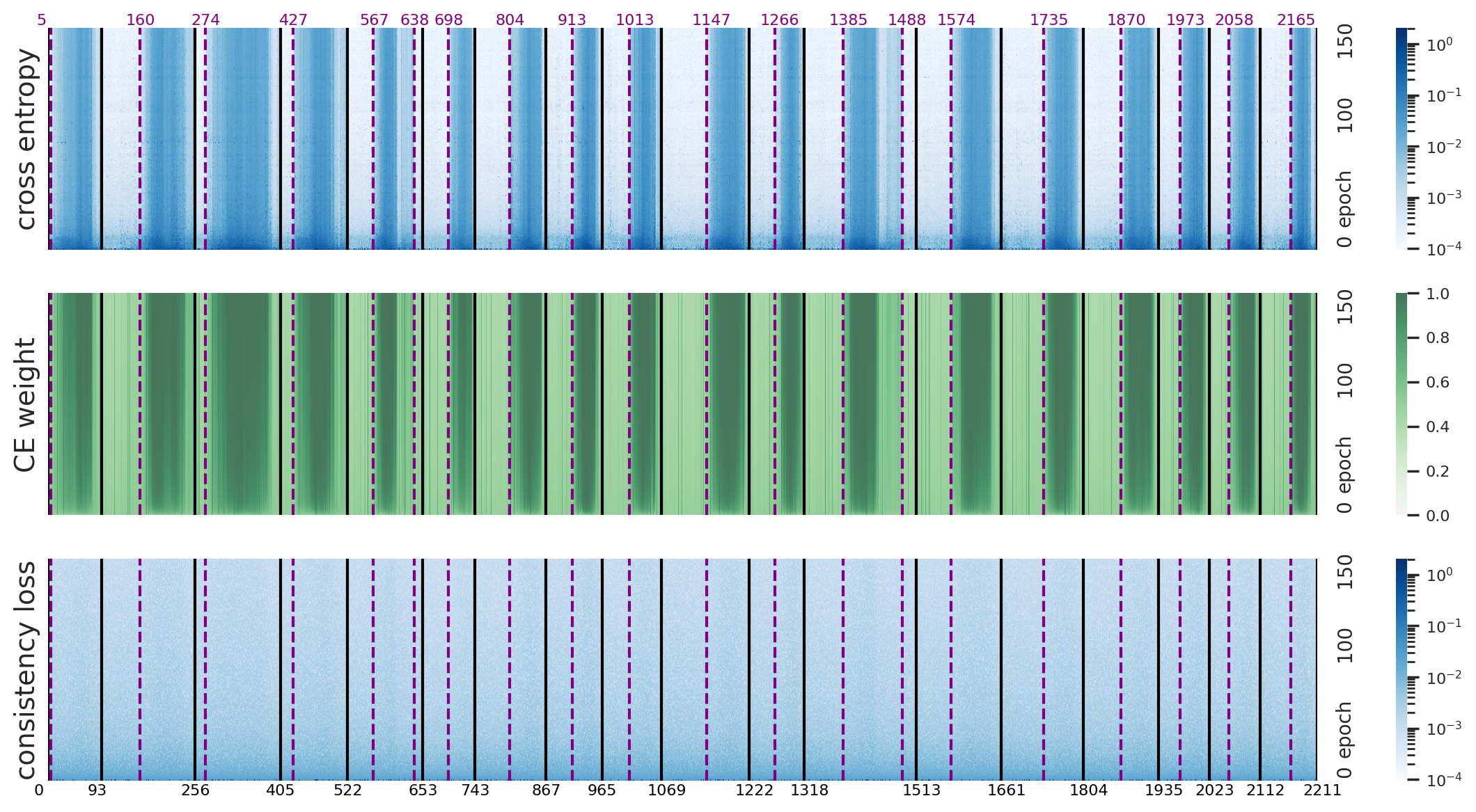}
    \caption{$\lossce\rbr{\theta_t;(\xb_i,\yb_i)}$ (top), CE weights $\betab_t$ (middle), and $\regdac\rbr{\theta_t;\xb_i,A_{i,1},A_{i,2}}$ (bottom) of the entire Synapse training process. The x-axis indexes slices 0--2211. The y-axis enumerates epochs 0--150. Individual cases (patients) are partitioned by black lines, while purple lines separate slices with/without non-background pixels.}
    \label{fig:synapse-weights}
\end{figure}

\paragraph{Sample efficiency and distributional robustness.} 
We first demonstrate the \emph{sample efficiency} of \ours in comparison to the baseline (ERM+SGD) when training only on different subsets of the full Synapse training set (``\b{full}'' in \Cref{tab:synapse_sample_eff}). Specifically, 
\begin{enumerate*}[label=(\roman*)]
    \item  \b{half-slice} contains slices with even indices only in each case (patient)\footnote{Such sampling is equivalent to doubling the time interval between two consecutive scans or halving the scanning frequency in practice, resulting in the halving of sample size.};
    \item \b{half-vol} consists of 9 cases uniformly sampled from the total 18 cases in \b{full} where different cases tend to have distinct $\xi$s (\ie, ratios of label-dense samples);
    \item \b{half-sparse} takes the first half slices in each case, most of which tend to be label-sparse (\ie, $\xi$s are made to be small).
\end{enumerate*}
As shown in \Cref{tab:synapse_sample_eff}, the model trained with \ours on \b{half-slice} generalizes as well as a baseline model trained on \b{full}, if not better.
Moreover, the \b{half-vol} and \b{half-sparse} experiments illustrate the \emph{robustness} of \ours to concept shift. 
Furthermore, such sample efficiency and distributional robustness of \ours extend to the more widely used UNet architecture. We defer the detailed results and discussions on UNet to \Cref{subsec:exp_unet}.

\begin{table*}[!ht]
    \caption{\ours with TransUNet trained on the full Synapse and its subsets. 
    }\vspace{-.5em}
    \label{tab:synapse_sample_eff}
    \centering
    \begin{adjustbox}{width=1\textwidth}  
    \begin{tabular}{ll|cc|cccccccc}
    \toprule
    Training & Method &  DSC $\uparrow$ & HD95 $\downarrow$ & Aorta & Gallbladder & Kidney (L) & Kidney (R) & Liver & Pancreas & Spleen & Stomach 
    \\
    \midrule
    \multirow{2}{*}{full} 
    & baseline & 76.66 $\pm$ 0.88 & 29.23 $\pm$ 1.90 & 87.06 & 55.90 & 	81.95 & 75.58 & 94.29 & 56.30 & 86.05 & 76.17
    \\
    & \ours & \best{79.04 $\pm$ 0.21} & \best{27.39 $\pm$ 1.91} & 87.53 & 56.57 & 83.23 & 81.12 & 94.04 & 62.05 & 89.51 & 78.32
    \\
    \midrule
    \multirow{2}{*}{half-slice} 
    & baseline & 74.62 $\pm$ 0.78 & 31.62 $\pm$ 8.37 & 86.14 & 44.23 & 79.09 & 78.46 & 93.50 & 55.78 & 84.54 & 75.24
    \\
    & \ours & \best{77.37 $\pm$ 0.40} & \best{29.56 $\pm$ 1.09} & 86.89 & 55.96 & 82.15 & 78.63 & 94.34 & 57.36 & 86.60 & 77.05
    \\
    \midrule
    \multirow{2}{*}{half-vol} 
    & baseline & 71.08 $\pm$ 0.90 & 46.83 $\pm$ 2.91 & 84.38 & 46.71 & 78.19 & 74.55 & 92.02 & 48.03 & 76.28 & 68.47
    \\
    & \ours & \best{73.81 $\pm$ 0.94} & \best{35.33 $\pm$ 0.92} & 84.37 & 48.14 & 80.32 & 77.39 & 93.23 & 52.78 & 83.50 & 70.79
    \\
    \midrule
    \multirow{2}{*}{half-sparse} 
    & baseline & 31.74 $\pm$ 2.78 & 69.72 $\pm$ 1.37 & 65.71 & 8.33 & 59.46 & 51.59 & 51.18 & 10.72 & 6.92 & 0.00
    \\
    & \ours & \best{41.03 $\pm$ 2.12} & \best{59.04 $\pm$ 12.32} & 71.27 & 8.33 & 69.14 & 63.09 & 64.29 & 17.74 & 30.77 & 3.57
    \\
    \bottomrule
    \end{tabular}
    \end{adjustbox}
\end{table*}

\paragraph{Comparison with hard-thresholding algorithms.}
\Cref{tab:trim} illustrates the empirical advantage of \ours over the hard-thresholding algorithms, as suggested in \Cref{rmk:relation_hard_thresholding}. In particular, we consider the following hard-thresholding algorithms:
\begin{enumerate*}[label=(\roman*)]
    \item \b{trim-train} learns only from slices with at least one non-background pixel and trims the rest in each iteration on the fly;
    \item \b{trim-ratio} ranks the cross-entropy loss $\lossce\rbr{\theta_t;(\xb_i,\yb_i)}$ in each iteration (mini-batch) and trims samples with the lowest cross-entropy losses at a fixed ratio -- the ratio of all-background slices in the full training set ($1-\frac{1280}{2211} \approx 0.42$); 
    \item \b{ACR} further incorporates the data augmentation consistency regularization directly via the addition of $\regdac\rbr{\theta_t;\xb_i,A_{i,1},A_{i,2}}$ without reweighting;
    \item \b{pseudo-\ours} simulates the sample weights $\betab$ at the saddle point and learns via $\lossce\rbr{\theta_t;(\xb_i,\yb_i)}$ on slices with at least one non-background pixel while via $\regdac\rbr{\theta_t;\xb_i,A_{i,1},A_{i,2}}$ otherwise.
\end{enumerate*}
We see that naive incorporation of \b{ACR} brings less observable boosts to the hard-thresholding methods. Therefore, the deliberate combination via reweighting in \ours is essential for performance improvement.

\begin{table*}[!ht]
    \caption{\ours versus hard-thresholding algorithms with TransUNet on Synapse.}\vspace{-.5em}
    \label{tab:trim}
    \centering
    \begin{adjustbox}{width=\textwidth}
    \begin{tabular}{l|ccccccc}
    \toprule
    \multirow{2}{*}{Method} & \multirow{2}{*}{baseline} & \multicolumn{2}{c}{trim-train} & \multicolumn{2}{c}{trim-ratio} & \multirow{2}{*}{pseudo-\ours}  & \multirow{2}{*}{\ours} 
    \\ 
    \cline{3-6} 
    & & & +ACR &  & +ACR & \\
    \midrule
    DSC $\uparrow$ & 76.66 $\pm$ 0.88 & 76.80 $\pm$ 1.13 & 78.42 $\pm$ 0.17 & 76.49 $\pm$ 0.16 & 77.71 $\pm$ 0.56 & 77.72 $\pm$ 0.65 & \best{79.04 $\pm$ 0.21} \\
    HD95 $\downarrow$ & 29.23 $\pm$ 1.90 & 32.05 $\pm$ 2.34 & 27.84 $\pm$ 1.16 & 31.96 $\pm$ 2.60 & 28.51 $\pm$ 2.66 & 28.45 $\pm$ 1.18 & \best{27.39 $\pm$ 1.91} \\
    \bottomrule
    \end{tabular}
    \end{adjustbox}
\end{table*}

\paragraph{Segmentation on ACDC.}
Performance improvements granted by \ours are also observed on the ACDC dataset (\Cref{tab:acdc}). We defer detailed visualization of ACDC segmentation to \Cref{apx:additional_experiment}.
\begin{table*}[!ht]
    \caption{\ours with TransUNet trained on ACDC.}\vspace{-.5em}
    \label{tab:acdc}
    \centering
    \begin{adjustbox}{width=.6\textwidth} 
    \begin{tabular}{l|cc|ccc}
    \toprule
    Method & DSC $\uparrow$ & HD95 $\downarrow$  & RV & Myo & LV \\
    \midrule
    TransUNet &	89.40 $\pm$ 0.22 & 2.55 $\pm$ 0.37 & 89.17 & 83.24 & 95.78 \\
    \ours (ours) &	\best{90.67 $\pm$ 0.27} & \best{1.45 $\pm$ 0.55} & 90.00 & 85.94 & 96.06 \\
    \bottomrule
    \end{tabular}
    \end{adjustbox}
\end{table*}

\subsection{Ablation Study}
\paragraph{On the influence of consistency regularization.} 
To illustrate the role of consistency regularization in \ours, we consider the \b{reweight-only} scenario with $\lambdac=0$ such that $\regdac\rbr{\theta_t;\xb_i,A_{i,1},A_{i,2}} \equiv 0$ and therefore $\iloc{\bb}{2}$ (\Cref{alg:srw_dac} line 7) remains intact. With zero consistency regularization in \ours, reweighting alone brings little improvement (\Cref{tab:ablation}).

\paragraph{On the influence of sample reweighting.} 
We then investigate the effect of sample reweighting under different reweighting learning rates $\eta_{\betab}$ (recall \Cref{alg:srw_dac}):
\begin{enumerate*}[label=(\roman*)]
    \item \b{ACR-only} for $\eta_{\betab} = 0$ (equivalent to the naive addition of $\regdac\rbr{\theta_t;\xb_i,A_{i,1},A_{i,2}}$),
    \item \b{\ours-0.01} for $\eta_{\betab} = 0.01$, and 
    \item \b{\ours-1.0} for $\eta_{\betab} = 1.0$.
\end{enumerate*} 
As \Cref{tab:ablation} implies, when removing reweighting from \ours, augmentation consistency regularization alone improves DSC slightly from $76.28$ (baseline) to $77.89$ (ACR-only), whereas \ours boosts DSC to $79.12$ (\ours-1.0) with a proper choice of $\eta_{\betab}$.

\begin{table*}[!ht]
    \caption{Ablation study of \ours with TransUNet trained on Synapse.}\vspace{-.5em}
    \label{tab:ablation}
    \centering
    \begin{adjustbox}{width=1\textwidth}  
    \begin{tabular}{l|cc|cccccccc}
    \toprule
    Method &  DSC $\uparrow$ & HD95 $\downarrow$ & Aorta & Gallbladder & Kidney (L) & Kidney (R) & Liver & Pancreas & Spleen & Stomach \\
    \midrule
    baseline &	76.66 $\pm$ 0.88 & 29.23 $\pm$ 1.90 & 87.06 & 55.90 & 	81.95 & 75.58 & 94.29 & 56.30 & 86.05 & 76.17 \\
    reweight-only & 76.91 $\pm$ 0.88 & 30.92 $\pm$ 2.37 & 87.18 & 52.89	& 82.15 & 77.11	& 94.15 & 58.35	& 86.36 & 77.08 \\
    ACR-only & 78.01 $\pm$ 0.62 & 27.78 $\pm$ 2.80 & 87.51 & 58.79 & 83.39 & 79.26 & 94.70 & 58.99 & 86.02 & 75.43 \\
    \ours-0.01 & 77.75 $\pm$ 0.23 &	28.02 $\pm$ 3.50 & 87.33 & 56.68 & 83.35 & 78.53 & 94.45 & 57.02 & 87.72 & 76.94 \\
    \ours-1.0 & \best{79.04 $\pm$ 0.21} & \best{27.39 $\pm$ 1.91} & 87.53 & 56.57 & 83.23 & 81.12 & 94.04 & 62.05 & 89.51 & 78.32 \\
    \bottomrule
    \end{tabular}
    \end{adjustbox}
\end{table*}

\section{Discussion}

In this paper, we explore the information imbalance commonly observed in medical image segmentation and exploit the information in features of label-sparse samples via \ours, an adaptively weighted online optimization algorithm. 
\ours can be viewed as a careful combination of adaptive sample reweighting and data augmentation consistency regularization.
By casting the information imbalance among samples as a concept shift in the data distribution, we leverage the unsupervised data augmentation consistency regularization on the encoder layer outputs (of UNet-style architectures) as a natural reference for distinguishing the label-sparse and label-dense samples via the comparisons against the supervised average cross-entropy loss.
We formulate such comparisons as a weighted augmentation consistency (WAC) regularization problem and propose \ours for iterative and smooth separation of samples from different subpopulations with a convergence guarantee.
Our experiments on various medical image segmentation tasks with different UNet-style architectures empirically demonstrate the effectiveness of \ours not only in improving the segmentation performance and sample efficiency but also in enhancing the distributional robustness to concept shifts.

\paragraph{Limitations and future directions.}
From an algorithmic perspective, a limitation of this work is the utilization of the encoder layer outputs $\phi_\theta\rbr{\cdot}$ for data augmentation consistency regularization, which resulted in \ours being specifically tailored to UNet-style backbones. However, our method can be generalized to other architectures in principle by selecting a representation extractor in the network that
\begin{enumerate*}[label=(\roman*)]
    \item well characterizes the marginal distribution of features $P\rbr{\xb}$
    \item while being robust to the concept shift in $P\rsep{\yb}{\xb}$.
\end{enumerate*}
Further investigation into such generalizations is a promising avenue for future research.

Meanwhile, noticing the prevalence of concept shifts in natural data, especially for dense prediction tasks like segmentation and detection, we hope to extend the application/idea of \ours beyond medical image segmentation as a potential future direction.

\paragraph{Acknowledgement.}
R. Ward was partially supported by
AFOSR MURI FA9550-19-1-0005, 
NSF DMS 1952735
, NSF HDR1934932, and 
NSF 2019844
. Y. Dong was supported by NSF DMS 1952735. Y. Xie was supported by NSF 2019844. The authors wish to thank Qi Lei and Xiaoxia Wu for valuable discussions and Jieneng Chen for generously providing preprocessed medical image segmentation datasets.

\bibliographystyle{abbrvnat}
\bibliography{ref}

\clearpage
\appendix
\section{Separation of Label-sparse and Label-dense Samples}\label{subapx:spontaneous_separation_sparse_dense}

\begin{proof}[Proof of \Cref{prop:spontaneous_separation_sparse_dense}]
    We first observe that, since 
    $\lossce\rbr{\theta;\rbr{\xb,\yb}}$ and $\regdac\rbr{\theta;\xb,A_{1},A_{2}}$ are convex and continuous in $\theta$ for all $(\xb,\yb) \in \Xcal \times \Ycal$ and $A_1,A_2 \sim \Acal^2$, for all $i \in [n]$, $\wh L^\wdac_i\rbr{\theta,\betab}$ is continuous, convex in $\theta$, and affine (thus concave) in $\betab$; and therefore so is $\wh L^\wdac \rbr{\theta,\betab}$. 
    Then with the compact and convex domains $\theta \in \Fnb{\gamma}$ and $\betab \in [0,1]^n$, Sion's minimax theorem~\citep{sion1958minimax} suggests the minimax equality,
    \begin{align}\label{eq:pf_minimax_optimal}
        \min_{\theta \in \Fnb{\gamma}}~ \max_{\betab \in [0,1]^n}~ \wh L^\wdac\rbr{\theta,\betab} = \max_{\betab \in [0,1]^n}~ \min_{\theta \in \Fnb{\gamma}}~ \wh L^\wdac\rbr{\theta,\betab},
    \end{align}
    where $\inf, \sup$ can be replaced by $\min, \max$ respectively due to compactness of the domains.

    Further, by the continuity and convexity-concavity of $\wh L^\wdac \rbr{\theta,\betab}$, the pointwise maximum $\max_{\betab \in [0,1]^n}\wh L^\wdac\rbr{\theta,\betab}$ is lower semi-continuous and convex in $\theta$ while the pointwise minimum $\min_{\theta \in \Fnb{\gamma}}\wh L^\wdac\rbr{\theta,\betab}$ is upper semi-continuous and concave in $\betab$. Then via Weierstrass' theorem (\cite{bertsekas2009convex}, Proposition 3.2.1), there exist $\wh\theta^\wdac \in \Fnb{\gamma}$ and $\wh\betab \in [0,1]^n$ that achieve the minimax optimal by minimizing $\max_{\betab \in [0,1]^n}\wh L^\wdac\rbr{\theta,\betab}$ and maximizing $\min_{\theta \in \Fnb{\gamma}}\wh L^\wdac\rbr{\theta,\betab}$. Along with \Cref{eq:pf_minimax_optimal}, such $\rbr{\wh\theta^\wdac, \wh\betab}$ provides a saddle point for \Cref{eq:objective_dac_encoder_weighted} (\cite{bertsekas2009convex}, Proposition 3.4.1).

    Next, we show via contradiction that there exists a saddle point with $\wh\betab$ attained on a vertex $\wh\betab \in \cbr{0,1}^n$. Suppose the opposite, then for any saddle point $\rbr{\wh\theta^\wdac, \wh\betab}$, there must be an $i \in [n]$ with $\iloc{\wh\betab}{i} \in (0,1)$, where we have the following contradictions:
    \begin{enumerate}[label=(\roman*)]
        \item If $\lossce\rbr{\wh\theta^\wdac;\rbr{\xb_i,\yb_i}} < \regdac\rbr{\wh\theta^\wdac;\xb_i,A_{i,1},A_{i,2}}$, decreasing $\iloc{\wh\betab}{i}>0$ to $\iloc{\wh\betab'}{i}=0$ leads to $\wh L^\wdac\rbr{\wh\theta^\wdac, \wh\betab'} > \wh L^\wdac\rbr{\wh\theta^\wdac, \wh\betab}$, contradicting \Cref{eq:saddle_point_def}.
        \item If $\lossce\rbr{\wh\theta^\wdac;\rbr{\xb_i,\yb_i}} > \regdac\rbr{\wh\theta^\wdac;\xb_i,A_{i,1},A_{i,2}}$, increasing $\iloc{\wh\betab}{i}<1$ to $\iloc{\wh\betab'}{i}=1$ again leads to $\wh L^\wdac\rbr{\wh\theta^\wdac, \wh\betab'} > \wh L^\wdac\rbr{\wh\theta^\wdac, \wh\betab}$, contradicting \Cref{eq:saddle_point_def}.
        \item If $\lossce\rbr{\wh\theta^\wdac;\rbr{\xb_i,\yb_i}} = \regdac\rbr{\wh\theta^\wdac;\xb_i,A_{i,1},A_{i,2}}$, $\iloc{\wh\betab}{i}$ can be replaced with any value in $[0,1]$, including $0,1$.
    \end{enumerate}
    Therefore, there must be a saddle point $\rbr{\wh\theta^\wdac, \wh\betab}$ with $\wh\betab \in \cbr{0,1}^n$ such that
    \begin{align*}
        \iloc{\betab}{i} = \iffun{\lossce\rbr{\wh\theta^\wdac;\rbr{\xb_i,\yb_i}} > \regdac\rbr{\wh\theta^\wdac;\xb_i,A_{i,1},A_{i,2}}}.
    \end{align*}

    Finally, it remains to show that \whp over $\cbr{\rbr{\xb_i, \yb_i}}_{i \in [n]}\sim P_\xi^n$ and $\cbr{\rbr{A_{i,1}, A_{i,2}}}_{i \in [n]} \sim \Acal^{2n}$,
    \begin{enumerate}[label=(\roman*)]
        \item $\lossce\rbr{\wh\theta^\wdac;\rbr{\xb_i,\yb_i}} \le \regdac\rbr{\wh\theta^\wdac;\xb_i,A_{i,1},A_{i,2}}$ for all $\rbr{\xb_i,\yb_i} \sim P_0$; and
        \item $\lossce\rbr{\wh\theta^\wdac;\rbr{\xb_i,\yb_i}} > \regdac\rbr{\wh\theta^\wdac;\xb_i,A_{i,1},A_{i,2}}$ for all $\rbr{\xb_i,\yb_i} \sim P_1$; 
    \end{enumerate}
    which leads to $\iloc{\betab}{i}=\iffun{\rbr{\xb_i,\yb_i} \sim P_1}$ \whp as desired.
    To illustrate this, we begin by observing that when $P_0$ and $P_1$ are $n$-separated (\Cref{ass:separability_sparse_dense}), since $\wh\theta^\wdac \in \Fnb{\gamma}$, there exists some $\omega>0$ such that for each $i \in [n]$,
    \begin{align*}
        \PP\ssepp{\lossce\rbr{\wh\theta^\wdac;\rbr{\xb_i,\yb_i}} < \regdac\rbr{\wh\theta^\wdac;\xb_i,A_{i,1},A_{i,2}}}{\rbr{\xb_i, \yb_i} \sim P_0} \ge 1-\frac{1}{\Omega\rbr{n^{1+\omega}}},
    \end{align*}
    and 
    \begin{align*}
        \PP\ssepp{\lossce\rbr{\wh\theta^\wdac;\rbr{\xb_i,\yb_i}} > \regdac\rbr{\wh\theta^\wdac;\xb_i,A_{i,1},A_{i,2}}}{\rbr{\xb_i, \yb_i} \sim P_1} \ge 1-\frac{1}{\Omega\rbr{n^{1+\omega}}}.
    \end{align*}
    Therefore by the union bound over the set of $n$ samples $\cbr{\rbr{\xb_i, \yb_i}}_{i \in [n]}\sim P_\xi^n$,
    \begin{align}\label{eq:pf_union_bound_P0}
        \PP\sbr{\lossce\rbr{\wh\theta^\wdac;\rbr{\xb_i,\yb_i}} < \regdac\rbr{\wh\theta^\wdac;\xb_i,A_{i,1},A_{i,2}} ~\forall~ \rbr{\xb_i, \yb_i} \sim P_0} \ge 1-\frac{1}{\Omega\rbr{n^\omega}},
    \end{align}
    and 
    \begin{align}\label{eq:pf_union_bound_P1}
        \PP\sbr{\lossce\rbr{\wh\theta^\wdac;\rbr{\xb_i,\yb_i}} > \regdac\rbr{\wh\theta^\wdac;\xb_i,A_{i,1},A_{i,2}} ~\forall~ \rbr{\xb_i, \yb_i} \sim P_1} \ge 1-\frac{1}{\Omega\rbr{n^\omega}}.
    \end{align}
    Applying the union bound again on \Cref{eq:pf_union_bound_P0} and \Cref{eq:pf_union_bound_P1}, we have the desired condition holds with probability $1-\Omega\rbr{n^\omega}^{-1}$, \ie, \whp.
\end{proof}

\section{Convergence of \ours}\label{subapx:convergence_wdac}

Recall the underlying function class $\Fcal \ni f_\theta$ parameterized by some $\theta \in \Fcal_\theta$ that we aim to learn for the pixel-wise classifier $h_\theta = \argmax_{k \in [K]} \iloc{f_\theta\rbr{\xb}}{j,:}$, $j \in [d]$:
\begin{align}
    \Fcal = \csepp{f_\theta = \phi_\theta \circ \psi_\theta}{\phi_\theta: \Xcal \to \Zcal,~ \psi_\theta:\Zcal \to [0,1]^{d \times K}},
\end{align}
where $\phi_\theta, \psi_\theta$ correspond to the encoder and decoder functions. Formally, we consider an inner product space of parameters $\rbr{\Fcal_\theta, \abr{\cdot,\cdot}_\Fcal}$ with the induced norm $\nbr{\cdot}_\Fcal$ and dual norm $\nbr{\cdot}_{\Fcal,*}$. 

For any $d \in \N$, let $\Delta_d^n \dfeq \csepp{\sbr{\betab_1;\dots;\betab_n} \in [0,1]^{n \times d}}{\nbr{\betab_i}_1 = 1 ~\forall~ i \in [n]}$. Then \Cref{eq:objective_dac_encoder_weighted} can be reformulated as:
\begin{align}\label{eq:objective_dac_encoder_weighted_l1_ball_formulation}
    &\wh\theta^\wdac, \wh\betab = 
    \argmin_{\theta \in \Fnb{\gamma}}~
    \argmax_{\Bb \in \Delta_2^n}~ \cbr{\wh L^\wdac\rbr{\theta,\Bb} \dfeq \frac{1}{n} \sum_{i=1}^n \wh L^\wdac_i\rbr{\theta,\Bb}},
    \\
    &\wh L^\wdac_i \rbr{\theta,\Bb} \dfeq 
    \iloc{\Bb}{i,1} \cdot \lossce\rbr{\theta;\rbr{\xb_i,\yb_i}} + 
    \iloc{\Bb}{i,2} \cdot \regdac\rbr{\theta;\xb_i,A_{i,1},A_{i,2}}. \nonumber
\end{align}

\begin{proposition}[Convergence (formal restatement of \Cref{prop:convergence_srw_dac})]\label{prop:convergence_srw_dac_formal}
    Assume that 
    $\lossce\rbr{\theta;\rbr{\xb,\yb}}$ and $\regdac\rbr{\theta;\xb,A_{1},A_{2}}$ are convex and continuous in $\theta$ for all $(\xb,\yb) \in \Xcal \times \Ycal$ and $A_1,A_2 \sim \Acal^2$, and that $\Fnb{\gamma} \subset \Fcal_\theta$ is convex and compact. 
    If there exist
    \begin{enumerate}[label=(\roman*)]
        \item $C_{\theta,*} > 0$ such that $\frac{1}{n} \sum_{i=1}^n \nbr{\nabla_\theta \wh L_i^\wdac\rbr{\theta,\Bb}}_{\Fcal,*}^2 \le C_{\theta,*}^2$ for all $\theta \in \Fnb{\gamma}$, $\Bb \in \Delta_2^n$ and 
        \item $C_{\Bb,*} > 0$ such that $\frac{1}{n} \sum_{i=1}^n \max\cbr{\lossce\rbr{\theta;\rbr{\xb_i,\yb_i}}, \regdac\rbr{\theta;\xb_i,A_{i,1},A_{i,2}} }^2 \le C_{\Bb,*}^2$ for all $\theta \in \Fnb{\gamma}$,
    \end{enumerate} 
    then with $\eta_\theta = \eta_{\Bb} = 2 \Big/ \sqrt{5T \rbr{\gamma^2 C_{\theta,*}^2 + 2 n C_{\Bb,*}^2}}$, \Cref{alg:srw_dac} provides the convergence guarantee for the duality gap $\Ecal\rbr{\overline{\theta}_T, \overline{\Bb}_T} \dfeq \max_{\Bb \in \Delta_2^n} \wh L^\wdac\rbr{\overline{\theta}_T, \Bb} - \min_{\theta \in \Fnb{\gamma}} \wh L^\wdac\rbr{\theta, \overline{\Bb}_T}$:
    \begin{align*}
        \E\sbr{\Ecal\rbr{\overline{\theta}_T, \overline{\Bb}_T} }
        \le 2 \sqrt{\frac{5 \rbr{\gamma^2 C_{\theta,*}^2 + 2 n C_{\Bb,*}^2}}{T}},
    \end{align*}
    where $\overline{\theta}_T = \frac{1}{T} \sum_{t=1}^T \theta_t$ and $\overline{\Bb}_T = \frac{1}{T} \sum_{t=1}^T \Bb_t$.
\end{proposition}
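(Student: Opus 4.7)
The plan is to recast \Cref{alg:srw_dac} as stochastic online mirror descent (OMD) on the convex-concave saddle point problem (\Cref{eq:objective_dac_encoder_weighted_l1_ball_formulation}), with the squared Euclidean regularizer $\Psi_\theta(\theta)=\tfrac{1}{2}\|\theta-\theta_0\|_\Fcal^2$ on the primal variable and a block negative entropy regularizer $\Psi_\Bb(\Bb)=\sum_{i=1}^n\sum_{k=1}^2 \Bb_{i,k}\log\Bb_{i,k}$ on the dual variable. Line 9 of the algorithm is then the usual Euclidean mirror step, while lines 5--8 implement an exponentiated-gradient step restricted to the sampled row $i_t$ of $\Bb$. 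The first step is to verify that when $i_t\sim[n]$ uniformly, the directions $G_t^\theta=\nabla_\theta \wh L_{i_t}^\wdac(\theta_{t-1},\Bb_t)$ and the row-sparse matrix $G_t^\Bb$ whose only nonzero row is $i_t$ with entries $\bigl[\lossce(\theta_{t-1};(\xb_{i_t},\yb_{i_t})),\,\regdac(\theta_{t-1};\xb_{i_t},A_{i_t,1},A_{i_t,2})\bigr]$ are unbiased estimates of $\nabla_\theta \wh L^\wdac$ and $n\cdot \nabla_\Bb \wh L^\wdac$ respectively, with the factor $n$ absorbed into the effective learning rate $\eta_\Bb$.

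Next I would invoke two standard OMD regret inequalities. Since $\Psi_\theta$ is $1$-strongly convex in $\|\cdot\|_\Fcal$ and $\|\theta-\theta_0\|_\Fcal\le\gamma$ on $\Fnb{\gamma}$, the primal regret bound gives
\begin{equation*}
\sum_{t=1}^T \langle G_t^\theta,\theta_{t-1}-\theta^\star\rangle \;\le\; \frac{\gamma^2}{2\eta_\theta}+\frac{\eta_\theta}{2}\sum_{t=1}^T \|G_t^\theta\|_{\Fcal,*}^2.
\end{equation*}
Because each row of $\Bb_0=\b{1}/2$ has KL-divergence at most $\log 2$ to any target on $\Delta^2$, and only row $i_t$ is moved per iteration, aggregating the per-row mirror-descent regrets (using the $1$-strong convexity of negative entropy in $\ell_1$, with dual norm $\ell_\infty$) yields
\begin{equation*}
\sum_{t=1}^T \langle G_t^\Bb,\Bb^\star-\Bb_{t-1}\rangle \;\le\; \frac{n\log 2}{\eta_\Bb}+\frac{\eta_\Bb}{2}\sum_{t=1}^T \|G_t^\Bb\|_\infty^2.
\end{equation*}
Taking expectations and using the hypotheses, $\E\|G_t^\theta\|_{\Fcal,*}^2\le C_{\theta,*}^2$ and $\E\|G_t^\Bb\|_\infty^2\le C_{\Bb,*}^2$.

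The third step is the standard conversion from regret to the expected duality gap. Convexity of $\wh L^\wdac$ in $\theta$, linearity (thus concavity) in $\Bb$, and Jensen's inequality applied to the averaged iterates $\overline\theta_T,\overline\Bb_T$ give
\begin{equation*}
T\cdot\Ecal(\overline\theta_T,\overline\Bb_T) \;\le\; \sum_{t=1}^T\bigl[\langle \nabla_\theta\wh L^\wdac(\theta_{t-1},\Bb_{t-1}),\theta_{t-1}-\theta^\star\rangle+\langle \nabla_\Bb \wh L^\wdac(\theta_{t-1},\Bb_{t-1}),\Bb^\star-\Bb_{t-1}\rangle\bigr]
\end{equation*}
for any minimax pair $\theta^\star,\Bb^\star$ in the compact product domain (whose existence was established in \Cref{prop:spontaneous_separation_sparse_dense}). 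By the tower property and unbiasedness of $G_t^\theta,G_t^\Bb$, the right-hand side in expectation reduces (after the scaling by $n$ in the dual gradient) to the sum of the two regret bounds above. Balancing the learning rates via $\eta_\theta=\eta_\Bb$ and optimizing yields the claim; plugging the stated value $\eta=2/\sqrt{5T(\gamma^2 C_{\theta,*}^2+2nC_{\Bb,*}^2)}$ matches the constants.

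The main technical obstacle is the careful block-coordinate handling of the dual regret: since only one row is updated per step, I must justify that summing per-row regrets (each with diameter $\log 2$) produces a $\log 2\cdot n/\eta_\Bb$ bias term rather than a spurious $T$-dependent factor, and that the per-step dual gradient norm $\|G_t^\Bb\|_\infty^2$ indeed averages to $C_{\Bb,*}^2$ under uniform sampling. A secondary subtlety is that $\theta_t$ is updated using the already-refreshed $\betab_t$ but with gradients evaluated at $\theta_{t-1}$; this slight staggering can be absorbed into the OMD analysis by noting that only row $i_t$ of $\Bb$ changes and that $\wh L^\wdac$ is linear in $\Bb$, so the associated drift term is already captured inside the stated regret inequalities.
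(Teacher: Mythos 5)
Your overall plan matches the paper's: recast \Cref{alg:srw_dac} as stochastic online mirror descent on the convex-concave saddle point in \Cref{eq:objective_dac_encoder_weighted_l1_ball_formulation}, with the half-squared Euclidean mirror map on $\theta$ and the block negative-entropy mirror map on $\Bb\in\Delta_2^n$. The paper does exactly this and then cites the Nemirovski--Juditsky--Lan--Shapiro saddle-point bound (\Cref{lemma:mirror_descent_convergence_saddle_point}, their (3.11)), rather than re-deriving the regret inequalities inline. However, there are two gaps in your derivation.

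First, a minor inconsistency in the unbiasedness claim. Since $\wh L^\wdac(\theta,\Bb)=\tfrac{1}{n}\sum_i \wh L_i^\wdac(\theta,\Bb)$ and $\nabla_\Bb \wh L_i^\wdac$ is supported only on row $i$, we have
\begin{align*}
\E_{i_t\sim[n]}\bigl[\nabla_\Bb \wh L_{i_t}^\wdac(\theta,\Bb)\bigr] = \tfrac{1}{n}\sum_{i=1}^n \nabla_\Bb \wh L_i^\wdac(\theta,\Bb) = \nabla_\Bb \wh L^\wdac(\theta,\Bb),
\end{align*}
so $G_t^\Bb$ is unbiased for $\nabla_\Bb \wh L^\wdac$, not for $n\cdot\nabla_\Bb \wh L^\wdac$ --- exactly as for the primal gradient, and no $n$-rescaling of $\eta_\Bb$ is warranted. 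Carrying out the claimed ``absorb the factor $n$'' step would introduce a spurious $1/n$ into the regret-to-gap conversion and give the wrong final constants.

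Second, and more importantly, a naive sum of the two independent OMD regret inequalities with a common step size $\eta$ yields a bound of the \emph{decoupled} form $\sqrt{\bigl(R_\theta^2 + R_\Bb^2\bigr)\bigl(C_{\theta,*}^2 + C_{\Bb,*}^2\bigr)/T}$, which does not reproduce the stated \emph{coupled} form $\sqrt{\bigl(\gamma^2 C_{\theta,*}^2 + 2n C_{\Bb,*}^2\bigr)/T}$. If you substitute the claimed $\eta = 2/\sqrt{5T(\gamma^2 C_{\theta,*}^2 + 2n C_{\Bb,*}^2)}$ into your summed regret bound, the expression does not simplify to $2\sqrt{5(\gamma^2 C_{\theta,*}^2 + 2n C_{\Bb,*}^2)/T}$ --- one can check that for $n$ large the two sides are not comparable. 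The coupled form requires the combined saddle-point analysis of \Cref{lemma:mirror_descent_convergence_saddle_point}, which effectively weights the $\theta$- and $\Bb$-blocks by the inverse squared diameters $1/R_\theta^2$ and $1/R_\Bb^2$ inside a single product-space mirror map, thereby setting unequal effective step sizes for the two blocks while keeping the nominal $\eta_\theta=\eta_\Bb$; summing regrets per block with the same $\eta$ discards that balancing. To close the gap you should either invoke \Cref{lemma:mirror_descent_convergence_saddle_point} directly with $R_{\Fnb{\gamma}}^2 \le \gamma^2/2$, $R_{\Delta_2^n}^2 = n\log 2$, $\rho_u=\rho_v=1$, or rerun your regret derivation on the product norm $\|(\theta,\Bb)\|^2 \dfeq \tfrac{1}{2R_\theta^2}\|\theta\|_\Fcal^2 + \tfrac{1}{2R_\Bb^2}\|\Bb\|_{1,2}^2$ and its dual.
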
 

\begin{proof}[Proof of \Cref{prop:convergence_srw_dac_formal}]
The proof is an application of the standard convergence guarantee for the online mirror descent on saddle point problems, as recapitulated in \Cref{lemma:mirror_descent_convergence_saddle_point}.  

Specifically, for $\Bb \in \Delta_2^n$, we use the norm $\nbr{\Bb}_{1,2} \dfeq \sqrt{\sum_{i=1}^n \rbr{\sum_{j=1}^2 \abbr{\iloc{\Bb}{i,j}}}^2}$ with its dual norm $\nbr{\Bb}_{1,2,*} \dfeq \sqrt{\sum_{i=1}^n \rbr{\max_{j \in [2]} \abbr{\iloc{\Bb}{i,j}}}^2}$. 
We consider a mirror map $\varphi_{\Bb}: [0,1]^{n \times 2} \to \R$ such that $\varphi_{\Bb}\rbr{\Bb} = \sum_{i=1}^n \sum_{j=1}^2 \iloc{\Bb}{i,j} \log \iloc{\Bb}{i,j}$. We observe that, since $\iloc{\Bb}{i,:}, \iloc{\Bb'}{i,:} \in \Delta_2$ for all $i \in [n]$,
\begin{align*}
    D_{\varphi_{\Bb}}\rbr{\Bb, \Bb'} 
    = \sum_{i=1}^n \sum_{j=1}^2 \iloc{\Bb}{i,j} \log \frac{\iloc{\Bb}{i,j}}{\iloc{\Bb'}{i,j}} 
    \ge \frac{1}{2} \sum_{i=1}^n \rbr{\sum_{j=1}^2 \abbr{\iloc{\Bb}{i,j} - \iloc{\Bb'}{i,j}}}^2
    = \frac{1}{2} \nbr{\Bb - \Bb'}_{1,2}^2,
\end{align*}
and therefore $\varphi_{\Bb}$ is $1$-strongly convex with respect to $\nbr{\cdot}_{1,2}$. With such $\varphi_\Bb$, we have the associated Fenchel dual $\varphi^*_\Bb\rbr{\Gb} = \sum_{i=1}^n \log \rbr{\sum_{j=1}^2 \exp\rbr{\iloc{\Gb}{i,j}}}$, along with the gradients
\begin{align*}
    \iloc{\nabla \varphi_\Bb\rbr{\Bb}}{i,j} = 1 + \log \iloc{\Bb}{i,j},
    \quad
    \iloc{\nabla \varphi^*_\Bb\rbr{\Gb}}{i,j} = \frac{\exp\rbr{\iloc{\Gb}{i,j}}}{\sum_{j=1}^2 \exp\rbr{\iloc{\Gb}{i,j}}},
\end{align*}
such that the mirror descent update on $\Bb$ is given by
\begin{align*}
    \iloc{\rbr{\Bb_{t+1}}}{i,j} 
    = &\nabla\varphi_\Bb^*\rbr{\nabla\varphi_\Bb\rbr{\Bb_t} - \eta_\Bb \cdot \nabla_\Bb\wh L^\wdac_{i_t}\rbr{\theta_t, \Bb_{t}}}
    \\
    = &\frac{\iloc{\rbr{\Bb_{t}}}{i,j} \exp\rbr{\eta_\Bb \cdot \iloc{\rbr{\nabla_\Bb \wh L^\wdac_{i_t}\rbr{\theta_t, \Bb_t}}}{i,j}}}{\sum_{j=1}^2 \iloc{\rbr{\Bb_{t}}}{i,j} \exp\rbr{\eta_\Bb \cdot \iloc{\rbr{\nabla_\Bb \wh L^\wdac_{i_t}\rbr{\theta_t, \Bb_t}}}{i,j}}}.
\end{align*}    
For $i_t \sim [n]$ uniformly, the stochastic gradient with respect to $\Bb$ satisfies
\begin{align*}
    &\E_{i_t \sim [n]}\sbr{\nbr{\nabla_\Bb \wh L^\wdac_{i_t}\rbr{\theta_t, \Bb_t}}_{1,2,*}^2}
    \\
    = &\frac{1}{n} \sum_{i_t=1}^n \max\cbr{ \lossce\rbr{\theta_t;\rbr{\xb_{i_t},\yb_{i_t}}}, \regdac\rbr{\theta_t;\xb_{i_t},A_{i_t,1},A_{i_t,2}} }^2
    \le C_{\Bb,*}^2.
\end{align*}
Further, in the distance induced by $\varphi_\Bb$, we have
\begin{align*}
    R_{\Delta_2^n}^2 
    \dfeq \max_{\Bb \in \Delta_2^n} \varphi_{\Bb}\rbr{\Bb} - \min_{\Bb \in \Delta_2^n} \varphi_{\Bb}\rbr{\Bb}
    = 0 - \sum_{i=1}^n \sum_{j=1}^2 \frac{1}{2} \log \frac{1}{2}
    = n.
\end{align*}

Meanwhile, for $\theta \in \Fnb{\gamma}$, we consider the norm $\nbr{\theta}_\Fcal \dfeq \sqrt{\abr{\theta,\theta}_\Fcal}$ induced by the inner product that characterizes $\Fcal_\theta$, with the associated dual norm $\nbr{\cdot}_{\Fcal,*}$. We use a mirror map $\varphi_\theta:\Fcal_\theta \to \R$ such that $\varphi_\theta\rbr{\theta} = \frac{1}{2}\nbr{\theta-\theta^*}^2_{\Fcal}$. By observing that
\begin{align*}
    D_{\varphi_\theta}\rbr{\theta,\theta'} = \frac{1}{2} \nbr{\theta-\theta'}_{\Fcal}^2 \quad \forall~ \theta,\theta' \in \Fcal.
\end{align*}
we have $\varphi_\theta$ being $1$-strongly convex with respect to $\nbr{\cdot}_{\Fcal}$.
With the gradient of $\varphi_\theta$, $\nabla\varphi_\theta(\theta) = \theta-\theta^*$, and that of its Fenchel dual $\nabla\varphi_\theta^*(g)=g+\theta^*$, at the $(t+1)$-th iteration, we have
\begin{align*}
    \theta_{t+1} 
    = \nabla\varphi_\theta^*\rbr{\nabla\varphi_\theta\rbr{\theta_t} - \eta_\theta \cdot \nabla_\theta\wh L^\wdac_{i_t}\rbr{\theta_t, \Bb_{t+1}}}
    = \theta_t - \eta_\theta \cdot \nabla_\theta\wh L^\wdac_{i_t}\rbr{\theta_t, \Bb_{t+1}}.
\end{align*}
For $i_t \sim [n]$ uniformly, the stochastic gradient with respect to $f$ satisfies that
\begin{align*}
    \E_{i_t \sim [n]}\sbr{\nbr{\nabla_\theta \wh L^\wdac_{i_t}\rbr{\theta_t, \Bb_{t+1}}}_{\Fcal,*}^2}
    = \frac{1}{n} \sum_{i_t=1}^n \nbr{\nabla_\theta \wh L_{i_t}^\wdac\rbr{\theta_t,\Bb_{t+1}}}_{\Fcal,*}^2 
    \le C_{\theta,*}^2.
\end{align*}
Further, in light of the definition of $\Fnb{\gamma}$, since $\theta^* \in \Fnb{\gamma}$, with $\theta^* = \argmin_{\theta \in \Fnb{\gamma}} \varphi_\theta(\theta)$ and $\theta' = \argmax_{\theta \in \Fnb{\gamma}} \varphi_\theta(\theta)$, we have 
\begin{align*}
    R_{\Fnb{\gamma}}^2 
    \dfeq \max_{\theta \in \Fnb{\gamma}} \varphi_\theta\rbr{\theta} - \min_{\theta \in \Fnb{\gamma}} \varphi_\theta\rbr{\theta}
    = \frac{1}{2}\nbr{\theta' - \theta^*}_{\Fcal}^2 
    \le \frac{\gamma^2}{2}.
\end{align*}

Finally, leveraging \Cref{lemma:mirror_descent_convergence_saddle_point} completes the proof.
\end{proof}

We recall the standard convergence guarantee for online mirror descent on saddle point problems. In general, we consider a stochastic function $F:\Ucal \times \Vcal \times \Ical \to \R$ with the randomness of $F\rbr{u,v;i}$ on $i \in \Ical$. 
Overloading notation $\Ical$ both as the distribution of $i$ and as the support, we are interested in solving the saddle point problem on the expectation function
\begin{align}\label{eq:apx_omd_saddle_point_problem}
    \min_{u \in \Ucal} \max_{v \in \Vcal} f\rbr{u,v}
    \quad \t{where} \quad
    f\rbr{u,v} \dfeq \E_{i \sim \Ical}\sbr{F\rbr{u,v;i}}.
\end{align}

\begin{assumption}\label{ass:apx_omd_objective}
Assume that the stochastic objective satisfies the following:
\begin{enumerate}[label=(\roman*)]
    \item For every $i \in \Ical$, $F\rbr{\cdot,v,i}$ is convex for all $v \in \Vcal$ and $F\rbr{u,\cdot,i}$ is concave for all $u \in \Ucal$.
    \item The stochastic subgradients $G_u\rbr{u,v;i} \in \partial_u F\rbr{u,v;i}$ and $G_v\rbr{u,v;i} \in \partial_v F\rbr{u,v;i}$ with respect to $u$ and $v$ evaluated at any $\rbr{u,v} \in \Ucal \times \Vcal$ provide unbiased estimators for some respective subgradients of the expectation function: for any $\rbr{u,v} \in \Ucal \times \Vcal$, there exist some $g_u\rbr{u,v} \dfeq \E_{i \sim \Ical}\sbr{G_u\rbr{u,v;i}} \in \partial_u f\rbr{u,v}$ and $g_v\rbr{u,v} \dfeq \E_{i \sim \Ical}\sbr{G_v\rbr{u,v;i}} \in \partial_v f\rbr{u,v}$.
    \item Let $\nbr{\cdot}_\Ucal$ and $\nbr{\cdot}_\Vcal$ be arbitrary norms that are well-defined on $\Ucal$ and $\Vcal$, while $\nbr{\cdot}_{\Ucal,*}$ and $\nbr{\cdot}_{\Vcal,*}$ be their respective dual norms. There exist constants $C_{u,*}, C_{v,*} > 0$ such that
    \begin{align*}
        \E_{i \sim \Ical}\sbr{\nbr{G_u\rbr{u,v;i}}_{\Ucal,*}^2} \le C_{u,*}^2 ~\land~
        \E_{i \sim \Ical}\sbr{\nbr{G_v\rbr{u,v;i}}_{\Vcal,*}^2} \le C_{v,*}^2
        \quad\forall~ \rbr{u,v} \in \Ucal \times \Vcal.
    \end{align*}    
\end{enumerate} 
\end{assumption}   

For online mirror descent, we further introduce two mirror maps that induce distances on $\Ucal$ and $\Vcal$, respectively.
\begin{assumption}\label{ass:apx_omd_mirror_map}
Let $\varphi_u: \Dcal_u \to \R$ and $\varphi_v: \Dcal_v \to \R$ satisfy the following:
\begin{enumerate}[label=(\roman*)]
    \item $\Ucal \subseteq \Dcal_u \cup \partial \Dcal_u$, $\Ucal \cap \Dcal_u \neq \emptyset$ and $\Vcal \subseteq \Dcal_v \cup \partial \Dcal_v$, $\Vcal \cap \Dcal_v \neq \emptyset$.
    \item $\varphi_u$ is $\rho_u$-strongly convex with respect to $\nbr{\cdot}_\Ucal$; $\varphi_v$ is $\rho_v$-strongly convex with respect to $\nbr{\cdot}_\Vcal$.
    \item $\lim_{u \to \partial \Dcal_u} \nbr{\nabla \varphi_u(u)}_{\Ucal,*} = \lim_{v \to \partial \Dcal_v} \nbr{\nabla \varphi_v(v)}_{\Vcal,*} = +\infty$.
\end{enumerate}
\end{assumption}
Given the learning rates $\eta_u, \eta_v$, in each iteration $t=1,\dots,T$, the online mirror descent samples $i_t \sim \Ical$ and updates 
\begin{align}\label{eq:apx_omd_update}
    &v_{t+1} = \argmin_{v \in \Vcal}\ -\eta_v \cdot G_v\rbr{u_t,v_t;i_t}^\top v + D_{\varphi_v}\rbr{v, v_t},
    \nonumber \\
    &u_{t+1} = \argmin_{u \in \Ucal}\ \eta_u \cdot G_u\rbr{u_t,v_{t+1};i_t}^\top u + D_{\varphi_u}\rbr{u, u_t},
\end{align}
where $D_{\varphi}\rbr{w,w'} = \varphi(w)-\varphi(w')-\nabla\varphi(w')^\top(w-w')$ denotes the Bregman divergence.

We measure the convergence of the saddle point problem in the duality gap:
\begin{align*}
    \Ecal\rbr{\overline{u}_T, \overline{v}_T} \dfeq \max_{v \in \Vcal} f\rbr{\overline{u}_T, v} - \min_{u \in \Ucal} f\rbr{u, \overline{v}_T}
\end{align*}
such that, with
\begin{align*}
    R_\Ucal \dfeq \sqrt{\max_{u \in \Ucal \cap \Dcal_u} \varphi_u(u) - \min_{u \in \Ucal \cap \Dcal_u} \varphi_u(u)}
    \quad\t{and}\quad
    R_\Vcal \dfeq \sqrt{\max_{v \in \Vcal \cap \Dcal_v} \varphi_v(v) - \min_{v \in \Vcal \cap \Dcal_v} \varphi_v(v)},
\end{align*}
the online mirror descent converges as follows.
\begin{lemma}[\citep{nemirovski2009omd} (3.11)]\label{lemma:mirror_descent_convergence_saddle_point}
Under \Cref{ass:apx_omd_objective} and \Cref{ass:apx_omd_mirror_map}, when taking constant learning rates $\eta_u = \eta_v = 2 \Big/ \sqrt{5T \rbr{\frac{2 R_\Ucal^2}{\rho_u} C_{u,*}^2 + \frac{2 R_\Vcal^2}{\rho_v} C_{v,*}^2}}$, with $\overline{u}_T = \frac{1}{T} \sum_{t=1}^T u_t$ and $\overline{v}_T = \frac{1}{T} \sum_{t=1}^T v_t$,
\begin{align*}
    \E\sbr{\Ecal\rbr{\overline{u}_T, \overline{v}_T} }
    \le 2 \sqrt{\frac{10 \rbr{\rho_v R_\Ucal^2 C_{u,*}^2 + \rho_u R_\Vcal^2 C_{v,*}^2}}{\rho_u \rho_v \cdot T}}.
\end{align*}
\end{lemma}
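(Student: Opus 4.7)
The plan is to establish the convergence bound via the standard three-point (prox-mapping) inequality for mirror descent, combined with the convex-concave structure of $f$ and an averaging step via Jensen's inequality, then optimizing the learning rates. I would organize the argument in four steps.

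\textbf{Step 1 (prox-mapping inequality).} For any update of the form $w_{t+1} = \argmin_{w \in \Wcal} \eta g^\top w + D_{\varphi}(w, w_t)$ with $\varphi$ being $\rho$-strongly convex with respect to $\nbr{\cdot}_\Wcal$, the first-order optimality condition combined with the three-point identity $D_\varphi(w, w_t) = D_\varphi(w, w_{t+1}) + D_\varphi(w_{t+1}, w_t) + \nabla\varphi(w_{t+1}) - \nabla\varphi(w_t))^\top (w - w_{t+1})$ yields, for every reference $w \in \Wcal$,
\[
\eta g^\top (w_{t+1} - w) \le D_\varphi(w, w_t) - D_\varphi(w, w_{t+1}) - D_\varphi(w_{t+1}, w_t).
\]
Splitting $g^\top (w_t - w) = g^\top(w_t - w_{t+1}) + g^\top(w_{t+1} - w)$ and applying Fenchel--Young, $\eta g^\top (w_t - w_{t+1}) \le \frac{\eta^2}{2\rho} \nbr{g}_{\Wcal,*}^2 + \frac{\rho}{2}\nbr{w_t - w_{t+1}}_\Wcal^2 \le \frac{\eta^2}{2\rho} \nbr{g}_{\Wcal,*}^2 + D_\varphi(w_{t+1}, w_t)$, the $D_\varphi(w_{t+1}, w_t)$ terms cancel to give the key one-step inequality $\eta g^\top (w_t - w) \le D_\varphi(w, w_t) - D_\varphi(w, w_{t+1}) + \frac{\eta^2}{2\rho} \nbr{g}_{\Wcal,*}^2$.

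\textbf{Step 2 (apply to both updates and use convex-concave structure).} Apply Step 1 to the $v$-update with $g = -G_v(u_t, v_t; i_t)$ and the $u$-update with $g = G_u(u_t, v_{t+1}; i_t)$, for arbitrary reference points $u \in \Ucal$ and $v \in \Vcal$. Taking conditional expectations over $i_t$ given the history and invoking unbiasedness yields expected subgradient inner products $g_u(u_t, v_{t+1})^\top (u_t - u)$ and $g_v(u_t, v_t)^\top (v - v_t)$. By convexity of $f(\cdot, v_{t+1})$ and concavity of $f(u_t, \cdot)$ (passed through the expectation from the stochastic objective),
\[
g_u(u_t, v_{t+1})^\top (u_t - u) \ge f(u_t, v_{t+1}) - f(u, v_{t+1}), \qquad g_v(u_t, v_t)^\top (v - v_t) \ge f(u_t, v) - f(u_t, v_t).
\]
Adding these and resolving the offset $f(u_t, v_{t+1}) - f(u_t, v_t)$ via the concavity in $v$ (converting it back into a subgradient term along $v_{t+1} - v_t$ that is absorbed into the quadratic $\|G_v\|_{\Vcal,*}^2$ remainder through Young's inequality) produces a clean per-step bound $\E[f(u_t, v) - f(u, v_t)] \le (\text{Bregman differences}) + O(\eta_u^2 C_{u,*}^2 / \rho_u) + O(\eta_v^2 C_{v,*}^2/\rho_v)$.

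\textbf{Step 3 (telescope, average, Jensen).} Summing over $t = 1, \ldots, T$, the Bregman differences telescope to at most $D_{\varphi_u}(u, u_1) + D_{\varphi_v}(v, v_1) \le R_\Ucal^2 + R_\Vcal^2$ when $u_1, v_1$ are initialized at the $\varphi$-minimizers (absorbing constant factors into the $R$ radii in general). The squared-gradient remainder is bounded in expectation by $\frac{T \eta_u^2}{2\rho_u} C_{u,*}^2 + \frac{T \eta_v^2}{2\rho_v} C_{v,*}^2$ via the moment hypotheses. Dividing by $T$ and taking the supremum over $(u, v) \in \Ucal \times \Vcal$ gives
\[
\E\sbr{\sup_{u, v}\ \frac{1}{T} \sum_{t=1}^T \rbr{f(u_t, v) - f(u, v_t)}} \le \frac{R_\Ucal^2}{\eta_u T} + \frac{R_\Vcal^2}{\eta_v T} + \frac{\eta_u C_{u,*}^2}{2\rho_u} + \frac{\eta_v C_{v,*}^2}{2\rho_v}.
\]
Jensen's inequality applied to the convexity of $f(\cdot, v)$ and concavity of $f(u, \cdot)$ transports the average inside $f$, producing $f(\overline u_T, v) \le \frac{1}{T}\sum_t f(u_t, v)$ and $f(u, \overline v_T) \ge \frac{1}{T}\sum_t f(u, v_t)$, and thus $\E[\Ecal(\overline u_T, \overline v_T)]$ is upper-bounded by the right-hand side above.

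\textbf{Step 4 (optimize the rates).} Setting $\eta_u = \eta_v = 2\big/\sqrt{5T(2R_\Ucal^2 C_{u,*}^2/\rho_u + 2R_\Vcal^2 C_{v,*}^2/\rho_v)}$ balances the two pairs of terms (up to the factor $5$ absorbing the cross-term introduced by the Gauss--Seidel offset from Step 2), and a direct algebraic simplification yields the claimed $2\sqrt{10(\rho_v R_\Ucal^2 C_{u,*}^2 + \rho_u R_\Vcal^2 C_{v,*}^2)/(\rho_u \rho_v T)}$.

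The main obstacle is Step 2: the $u$-update uses the stochastic gradient evaluated at the already-advanced iterate $v_{t+1}$ rather than at $v_t$, so the convexity inequality outputs $f(u_t, v_{t+1}) - f(u, v_{t+1})$ instead of $f(u_t, v_t) - f(u, v_t)$. Reconciling this shift so that the two per-step bounds pair to telescope into $\frac{1}{T}\sum_t f(u_t, v) - f(u, v_t)$ requires carefully absorbing $f(u_t, v_{t+1}) - f(u_t, v_t)$ via concavity in $v$ and Fenchel--Young, and it is precisely this absorption that supplies the constant $\sqrt{10}$ in the final bound rather than the $\sqrt{8}$ one would obtain from a fully symmetric Mirror-Prox analysis.
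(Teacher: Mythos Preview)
The paper does not prove this lemma; it is quoted directly from Nemirovski et al.\ (2009), equation (3.11), and invoked as a black box in the proof of Proposition~\ref{prop:convergence_srw_dac_formal}. So there is no ``paper's proof'' to compare against.

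That said, your sketch has a genuine gap. In Step~2 you take conditional expectations over $i_t$ \emph{first}, replacing the stochastic gradients $G_u, G_v$ by their means $g_u, g_v$ and obtaining the convexity/concavity lower bounds pointwise for each \emph{fixed} reference pair $(u,v)$. Summing and telescoping as in Step~3 then gives, for each fixed $(u,v)$,
\[
\E\sbr{\frac{1}{T}\sum_{t} \bigl(f(u_t,v) - f(u,v_t)\bigr)} \;\le\; \frac{R_\Ucal^2}{\eta_u T} + \frac{R_\Vcal^2}{\eta_v T} + \frac{\eta_u C_{u,*}^2}{2\rho_u} + \frac{\eta_v C_{v,*}^2}{2\rho_v}.
\]
Taking the supremum over $(u,v)$ on both sides produces $\sup_{u,v}\E[\cdot]$, not the $\E[\sup_{u,v}\cdot]$ you actually write on the left; the inequality $\sup\E \le \E\sup$ goes the wrong way for what you need. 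But $\Ecal(\overline u_T,\overline v_T) = \max_v f(\overline u_T,v) - \min_u f(u,\overline v_T)$ has the extrema \emph{inside}, so bounding its expectation genuinely requires control of $\E[\sup_{u,v}\cdot]$.

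The missing ingredient is the uniform-in-$(u,v)$ control of the martingale noise that Nemirovski's argument actually supplies: one keeps the decomposition $G_u^\top(u_t-u) = g_u^\top(u_t-u) + \xi_t^\top(u_t-u)$ with $\xi_t = G_u - g_u$ \emph{before} taking expectations, notes that $\E[\xi_t^\top u_t]=0$ by measurability of $u_t$, and bounds $\sup_u \sum_t (-\xi_t^\top u)$ by running an auxiliary mirror-descent sequence driven by the $\xi_t$'s and applying the prox inequality once more to that sequence. This auxiliary step contributes an additional $R_\Ucal^2/\eta_u$ and an additional $\frac{\eta_u}{2\rho_u}\sum_t \E\nbr{\xi_t}_{\Ucal,*}^2$ (and symmetrically on the $v$ side), and these extras are the primary source of the factor $5$ in the learning rate and the $\sqrt{10}$ in the final bound---not the Gauss--Seidel offset you flag as the ``main obstacle.'' Your Step~4 plugs in the correct rate and recovers the stated constant, but Steps~2--3 as written do not deliver the inequality you then optimize.
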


\begin{example}[Binary linear pixel-wise classifiers with convex and continuous objectives]\label{ex:convex_continuous}
    We consider a pixel-wise binary classification problem with $\Xcal=[0,1]^d$, augmentations $A:\Xcal \to \Xcal$ for all $A \sim \Acal$, and a class of linear ``UNets'',
    \begin{align*}
        \Fcal = \csepp{f_\theta:\Xcal \to [0,1]^d}{f_\theta\rbr{\xb} = \sigma\rbr{\thetab_d \thetab_e^\top \xb} = \psi_\theta\rbr{\phi_\theta\rbr{\xb}},~ \phi_\theta\rbr{\xb} = \frac{1}{\sqrt{d}}\thetab_e^\top \xb},
    \end{align*}
    where the parameter space $\theta = \rbr{\thetab_e, \thetab_d} \in \Fcal_\theta = \SSS^{d-1} \times \SSS^{d-1}$ is equipped with the $\ell_2$ norm $\nbr{\theta}_{\Fcal} = \rbr{\nbr{\thetab_e}_2^2 + \nbr{\thetab_d}_2^2}^{1/2}$; 
    $\sigma:\R^d \to [0,1]^d$ denotes entry-wise application of the sigmoid function $\sigma(z) = (1+e^{-z})^{-1}$; and 
    the latent space of encoder outputs $\rbr{\Zcal,\varrho}$ is simply the real line. 
    Given the data distribution $P_\xi$, we recall that $\theta^* = \argmin_{\theta \in \Fcal_\theta} \E_{\rbr{\xb,\yb}\sim P_\xi}\sbr{\lossce\rbr{\theta;(\xb,\yb)} }$ for all $\xi \in [0,1]$ and let $\Fnb{\gamma} = \csepp{\theta \in \Fcal_\theta}{\nbr{\theta - \theta^*}_{\Fcal} \le \gamma}$ for some $\gamma = O\rbr{1/\sqrt{d}}$.
    We assume that $\abbr{\xb^\top \thetab_e^*} = O(1)$ for all $\xb \in \Xcal$. 
    Then, $\lossce\rbr{\theta;\rbr{\xb,\yb}}$ and $\regdac\rbr{\theta;\xb,A_1,A_2}$ are convex and continuous in $\theta$ for all $(\xb,\yb) \in \Xcal \times [K]^d$, $A_1,A_2 \sim \Acal^2$; while $C_{\theta,*} \le \max\rbr{2\sqrt{2}, 2\lambdac}$ and $C_{\betab,*} \le \max\rbr{O(1), 2\lambdac}$.
\end{example}

\begin{proof}[Rationale for \Cref{ex:convex_continuous}]
    Let $\yb_k = \iffun{\yb=k}$ entry-wise for $k=0,1$. We would like to show that, for any given $(\xb,\yb) \in \Xcal \times [K]^d$, $A_1,A_2 \sim \Acal^2$,
    \begin{align*}
        &\lossce\rbr{\theta} 
        = -\frac{1}{d} \rbr{\yb_1^\top \log \sigma\rbr{\thetab_d \thetab_e^\top \xb} + \yb_0^\top \log \sigma\rbr{-\thetab_d \thetab_e^\top \xb}},
        \\
        &\regdac\rbr{\theta} = \frac{\lambdac}{\sqrt{d}} \cdot \rbr{A_1(\xb)-A_2(\xb)}^\top \thetab_e
    \end{align*}
    are convex and continuous in $\theta = \rbr{\thetab_e,\thetab_d}$.

    First, we observe that $\regdac\rbr{\theta}$ is linear (and therefore convex and continuous) in $\theta$ for all $\xb \in \Xcal$, $A_1,A_2 \sim \Acal^2$, with
    \begin{align*}
        \nabla_{\thetab_e} \regdac\rbr{\theta} = \frac{\lambdac}{\sqrt{d}} \cdot \rbr{A_1(\xb)-A_2(\xb)},
        \quad
        \nabla_{\thetab_d} \regdac\rbr{\theta} = \b{0}
    \end{align*}
    such that $\nbr{\nabla_{\theta} \regdac\rbr{\theta}}_{\Fcal,*} \le 2 \lambdac$.
    
    Meanwhile, with $\zb\rbr{\theta} = \thetab_d \thetab_e^\top \xb$, we have $\lossce\rbr{\theta} = -\frac{1}{d} \rbr{\yb_1^\top \log \sigma\rbr{\zb\rbr{\theta}} + \yb_0^\top \log \sigma\rbr{-\zb\rbr{\theta}}}$ being convex and continuous in $\zb\rbr{\theta}$:
    \begin{align*}
        \nabla_{\zb}^2 \lossce\rbr{\theta} = \frac{1}{d} \diag\rbr{\sigma\rbr
        {\zb\rbr{\theta}}} \diag\rbr{1-\sigma\rbr
        {\zb\rbr{\theta}}} \ageq 0.
    \end{align*}
    Therefore, $\lossce\rbr{\theta}$ is convex and continuous in $\theta$ for all $\rbr{\xb,\yb} \in \Xcal \times [K]^d$:
    \begin{align*}
        \underbrace{\nabla_{\theta}^2 \lossce\rbr{\theta}}_{2d \times 2d} = 
        \bmat{\xb \thetab_d^\top \\ \rbr{\thetab_e^\top \xb} \Ib_{d}}
        \rbr{\frac{1}{d} \diag\rbr{\sigma\rbr
        {\zb\rbr{\theta}}} \diag\rbr{1-\sigma\rbr
        {\zb\rbr{\theta}}}}
        \bmat{\xb \thetab_d^\top & \rbr{\thetab_e^\top \xb} \Ib_{d}}
        \ageq 0,
    \end{align*}
    where $\Ib_d$ denotes the $d \times d$ identity matrix.
    Further, from the derivation, we have
    \begin{align*}
        &\nabla_{\thetab_e} \lossce\rbr{\theta} = \frac{1}{d} \thetab_d^\top \rbr{\sigma\rbr{\thetab_d \thetab_e^\top \xb}-\yb} \xb,
        \quad
        \nabla_{\thetab_d} \lossce\rbr{\theta} = \frac{\thetab_e^\top \xb}{d} \rbr{\sigma\rbr{\thetab_d \thetab_e^\top \xb}-\yb}
    \end{align*}
    such that $\nbr{\nabla_{\theta} \lossce\rbr{\theta}}_{\Fcal,*} = \sqrt{\nbr{\nabla_{\thetab_e} \lossce\rbr{\theta}}_2^2 + \nbr{\nabla_{\thetab_d} \lossce\rbr{\theta}}_2^2} \le 2 \sqrt{2}$.

    Finally, knowing $\nbr{\nabla_{\theta} \lossce\rbr{\theta}}_{\Fcal,*}  \le 2 \sqrt{2}$ and $\nbr{\nabla_{\theta} \regdac\rbr{\theta}}_{\Fcal,*} \le 2 \lambdac$, we have
    \begin{align*}
        \nbr{\nabla_\theta \wh L_i^\wdac\rbr{\theta,\betab}}_{\Fcal,*} 
        \le \iloc{\betab}{i} \nbr{\nabla_{\theta} \lossce\rbr{\theta}}_{\Fcal,*} + (1-\iloc{\betab}{i}) \nbr{\nabla_{\theta} \regdac\rbr{\theta}}_{\Fcal,*} \le \max\rbr{2\sqrt{2}, 2\lambdac}
    \end{align*}
    for all $i \in [n]$, and therefore,
    \begin{align*}
        C_{\theta,*} \le \max\rbr{2\sqrt{2}, 2\lambdac}.
    \end{align*}
    Besides, with
    \begin{align*}
        \regdac\rbr{\theta} \le \frac{\lambdac}{\sqrt{d}} \nbr{A_1(\xb)-A_2(\xb)}_2 \nbr{\thetab_e}_2 \le 2 \lambdac,
    \end{align*} 
    and since 
    \begin{align*}
        \iloc{\rbr{\thetab_d \thetab_e^\top \xb}}{j} 
        \le &\abbr{\xb^\top \thetab_e} \le \abbr{\xb^\top \rbr{\thetab_e - \thetab_e^*}} + \abbr{\xb^\top \thetab_e^*} \le \nbr{\xb}_2 \nbr{\thetab_e - \thetab_e^*}_2 + O(1)
        \\
        \le & \gamma \sqrt{d} + O(1) = O(1)
    \end{align*}
    for all $j \in [d]$, $\lossce\rbr{\theta} \le \log\rbr{1+e^{O(1)}} = O(1)$, we have 
    \begin{align*}
        C_{\betab,*} \le \max\rbr{O(1), 2\lambdac}.
    \end{align*}
\end{proof}

\section{Dice Loss for Pixel-wise Class Imbalance}\label{apx:dice}

With finite samples in practice, since the averaged cross-entropy loss (\Cref{eq:cross_entropy_loss}) weights each pixel in the image label equally, the pixel-wise class imbalance can become a problem. For example, the background pixels can be dominant in most of the segmentation labels, making the classifier prone to predict pixels as background. 

To cope with such vulnerability, \cite{chen2021transunet,cao2021swin,wong2018segmentation,taghanaki2019combo,yeung2022unified} propose to combine the cross-entropy loss with the \emph{dice loss}---a popular segmentation loss based on the overlap between true labels and their corresponding predictions in each class:
\begin{align}\label{eq:dice_loss}
    \lossdice\rbr{\theta;\rbr{\xb,\yb}} = 1 - \frac{1}{K} \sum_{k=1}^K \dsc\rbr{\iloc{f_\theta\rbr{\xb}}{:,k}, \iffun{\yb=k}},
\end{align}
where for any $\pb \in [0,1]^d$, $\qb \in \cbr{0,1}^d$, $\dsc\rbr{\pb,\qb} = \frac{2 \pb^\top \qb}{\nbr{\pb}_1 + \nbr{\qb}_1} \in [0,1]$ denotes the dice coefficient~\citep{milletari2016vnet,taghanaki2019deep}. Notice that by measuring the bounded dice coefficient for each of the $K$ classes individually, the dice loss tends to be robust to class imbalance. 

\cite{taghanaki2019combo} merges both dice and averaged cross-entropy losses via a convex combination. It is also a common practice to add a smoothing term in both the nominator and denominator of the DSC~\citep{russell2016artificial}.

Combining the dice loss (\Cref{eq:dice_loss}) with the weighted augmentation consistency regularization formulation (\Cref{eq:objective_dac_encoder_weighted}), in practice, we solve
\begin{align}\label{eq:objective_wac_dice}
    &\wh\theta^\wdac, \wh\betab \in 
    \argmin_{\theta \in \Fnb{\gamma}}~
    \argmax_{\betab \in [0,1]^n}~ 
    \cbr{\wh L^\wdac\rbr{\theta,\betab} \dfeq \frac{1}{n} \sum_{i=1}^n \wh L^\wdac_i\rbr{\theta,\betab}}
    \\
    &\wh L^\wdac_i \rbr{\theta,\betab} \dfeq 
    \lossdice\rbr{\theta;\rbr{\xb_i,\yb_i}} + 
    \iloc{\betab}{i} \cdot \lossce\rbr{\theta;\rbr{\xb_i,\yb_i}} + 
    (1-\iloc{\betab}{i}) \cdot \regdac\rbr{\theta;\xb_i,A_{i,1},A_{i,2}} \nonumber
\end{align}
with a slight modification in \Cref{alg:srw_dac} line 9:
\begin{align*}
    \theta_t \gets \theta_{t-1} - \eta_\theta \cdot \Big(
    \nabla_\theta\lossdice\rbr{\theta_{t-1};\rbr{\xb_{i_t},\yb_{i_t}}} + 
    \iloc{\rbr{\betab_t}}{i_t} \cdot \nabla_\theta\lossce\rbr{\theta_{t-1};\rbr{\xb_{i_t},\yb_{i_t}}}&
    \\
    + \rbr{1-\iloc{\rbr{\betab_t}}{i_t}} \cdot \nabla_\theta\regdac\rbr{\theta_{t-1};\xb_{i_t},A_{i_t,1},A_{i_t,2}}& \Big).
\end{align*}

\paragraph{On the influence of incorporating dice loss in experiments.} 
We note that, in the experiments, the dice loss $\lossdice$ is treated independently of \ours in \Cref{alg:srw_dac} via standard stochastic gradient descent. In particular for the comparison with hard-thresholding algorithms in \Cref{tab:trim}, we keep the updating on $\lossdice$ of the original untrimmed batch intact for both \b{trim-train} and \b{trim-ratio} to exclude the potential effect of $\lossdice$ that is not involved in reweighting.

\section{Implementation Details and Datasets}\label{app:imp}
We follow the official implementation of TransUNet\footnote{\href{https://github.com/Beckschen/TransUNet}{https://github.com/Beckschen/TransUNet}} for model training. We use the same optimizer (SGD with learning rate $0.01$, momentum $0.9$, and weight decay 1e-4). 
For the Synapse dataset, we train TransUNet for 150 epochs on the training dataset and evaluate the last-iteration model on the test dataset. For the ACDC dataset, we train TransUNet for 360 epochs in total, while validating models on the ACDC validation dataset for every 10 epochs and testing on the best model selected by the validation. 
The total number of training iterations (\ie, total number of batches) is set to be the same as that in the vanilla TransUNet~\citep{chen2021transunet} experiments. In particular, the results in \Cref{tab:synapse_sample_eff} are averages (and standard deviations) over $3$ arbitrary random seeds. The results in \Cref{tab:trim}, \Cref{tab:acdc}, and \Cref{tab:ablation} are given by the original random seed used in the TransUNet experiments.

\paragraph{Synapse multi-organ segmentation dataset (Synapse).}The Synapse dataset\footnote{See detailed description at \href{https://www.synapse.org/\#!Synapse:syn3193805/wiki/217789}{{https://www.synapse.org/\#!Synapse:syn3193805/wiki/217789}}} is multi-organ abdominal CT scans for medical image segmentation in the MICCAI 2015 Multi-Atlas Abdomen Labelling Challenge~\citep{chen2021transunet}. There are 30 cases of CT scans with variable sizes $(512 \times 512 \times 85 - 512 \times 512 \times 198)$, and slice thickness ranges from $2.5$mm to $5.0$mm. We use the pre-processed data provided by \cite{chen2021transunet} and follow their train/test split to use 18 cases for training and 12 cases for testing on 8 abdominal organs---aorta, gallbladder, left kidney (L), right kidney (R), liver, pancreas, spleen, and stomach. The abdominal organs were labeled by experience undergraduates and verified by a radiologist using MIPAV software according to the information from Synapse wiki page.

\paragraph{Automated cardiac diagnosis challenge dataset (ACDC).}The ACDC dataset\footnote{See detailed description at \href{https://www.creatis.insa-lyon.fr/Challenge/acdc/}{https://www.creatis.insa-lyon.fr/Challenge/acdc/}} is cine-MRI scans in the MICCAI 2017 Automated Cardiac Diagnosis Challenge. There are $200$ scans from $100$ patients, and each patient has two frames with slice thickness from $5$mm to $8$mm. We use the pre-processed data also provided by \cite{chen2021transunet} and follow their train/validate/test split to use $70$ patients' scans for training, $10$ patients' scans for validation, and 20 patients' scans for testing on three cardiac structures---left ventricle (LV), myocardium (MYO), and right ventricle (RV). The data were labeled by one clinical expert according to the description on ACDC dataset website.

\section{Additional Experimental Results}\label{apx:additional_experiment}

\subsection{Sample Efficiency and Robustness of \ours with UNet}\label{subsec:exp_unet}

In addition to the empirical evidence on TransUNet presented in \Cref{tab:synapse_sample_eff}, here, we demonstrate that the sample efficiency and distributional robustness of \ours extend to the more widely used UNet architecture. 
In \Cref{tab:synapse_sample_eff_unet}, analogous to \Cref{tab:synapse_sample_eff}, the experiments on the \b{full} and \b{half-slice} datasets provide evidence for the \emph{sample efficiency} of \ours compared to the baseline (ERM+SGD) on UNet. Meanwhile, the \emph{distributional robustness} of \ours with UNet is well illustrated by the \b{half-vol} and \b{half-sparse} experiments.

\begin{table}[ht]
    \caption{\ours with UNet trained on the full Synapse and its subsets
    }
    \label{tab:synapse_sample_eff_unet}
    \centering
    \begin{adjustbox}{width=1\textwidth}  
    \begin{tabular}{ll|cc|cccccccc}
    \toprule
    Training & Method &  DSC $\uparrow$ & HD95 $\downarrow$ & Aorta & Gallbladder & Kidney (L) & Kidney (R) & Liver & Pancreas & Spleen & Stomach 
    \\
    \midrule
    \multirow{2}{*}{full} 
    & baseline & 74.04 $\pm$ 1.52 & 36.65 $\pm$ 0.33 & 84.93 & 55.59 & 77.59 & 70.92 & 92.21 & 55.01 & 82.87 & 73.21
    \\
    & \ours & \best{76.71 $\pm$ 0.62} & \best{30.67 $\pm$ 2.85}
    & 85.68 & 55.19 & 80.15 & 75.45 & 94.11 & 56.19 & 87.54 & 81.39
    \\
    \midrule
    \multirow{2}{*}{half-slice} 
    & baseline & 73.09 $\pm$ 0.10 & 40.05 $\pm$ 4.99 & 83.23 & 53.18 & 74.69 & 71.51 & 92.74 & 52.81 & 83.85 & 72.71 
    \\
    & \ours & \best{75.12 $\pm$ 0.78} & \best{29.26 $\pm$ 2.16} & 85.15 & 55.77 & 79.29 & 72.47 & 93.71 & 54.93 & 86.09 & 73.53
    \\
    \midrule
    \multirow{2}{*}{half-vol} 
    & baseline & 63.21 $\pm$ 2.53 & 64.20 $\pm$ 4.46 & 79.46 & 45.79 & 55.79 & 54.91 & 88.65 & 41.61 & 71.68 & 67.77
    \\
    & \ours & \best{71.09 $\pm$ 1.14} & \best{39.95 $\pm$ 7.76} & 83.15 & 49.14 & 75.74 & 70.33 & 90.47 & 44.81 & 82.34 & 72.75
    \\
    \midrule
    \multirow{2}{*}{half-sparse} 
    & baseline & 37.30 $\pm$ 1.32 & 69.67 $\pm$ 2.89 & 61.57 & 8.33 & 57.45 & 50.44 & 60.28 & 23.51 & 17.83 & 18.99
    \\
    & \ours & \best{44.85 $\pm$ 1.03} & \best{62.40 $\pm$ 5.17} & 71.56 & 8.40 & 65.42 & 62.73 & 74.02 & 24.16 & 36.65 & 15.88
    \\
    \bottomrule
    \end{tabular}
    \end{adjustbox}
\end{table}

\paragraph{Implementation details of UNet experiments.}
For the backbone architecture of experiments in \Cref{tab:synapse_sample_eff_unet}, we use a UNet with a ResNet-34 encoder initialized with ImageNet pre-trained weights. We leverage the implementation of UNet and load the pre-trained model via the PyTorch API for segmentation models~\citep{Iakubovskii2019}. For training, we use the same optimizer (SGD with learning rate $0.01$, momentum $0.9$, and weight decay 1e-4) and the total number of epochs (150 epochs on Synapse training set) as the TransUNet experiments, evaluating the last-iteration model on the test dataset. As before, the results in \Cref{tab:synapse_sample_eff_unet} are averages (and standard deviations) over $3$ arbitrary random seeds.

\subsection{Visualization of Segmentation on ACDC dataset} 
As shown in Figure \ref{fig:acdc}, the model trained by \ours segments cardiac structures with more accurate shapes (column 1), identifies organs missed by baseline TransUNet (column 2-3) and circumvents the false-positive pixel classifications (\ie, fake predictions of background pixels as organs) suffered by the TransUNet baseline (column 4-6).

\begin{figure}[ht]
    \centering
    \includegraphics[width=.7\linewidth]{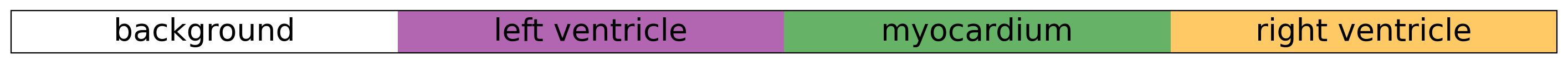}
    \includegraphics[width=.75\linewidth]{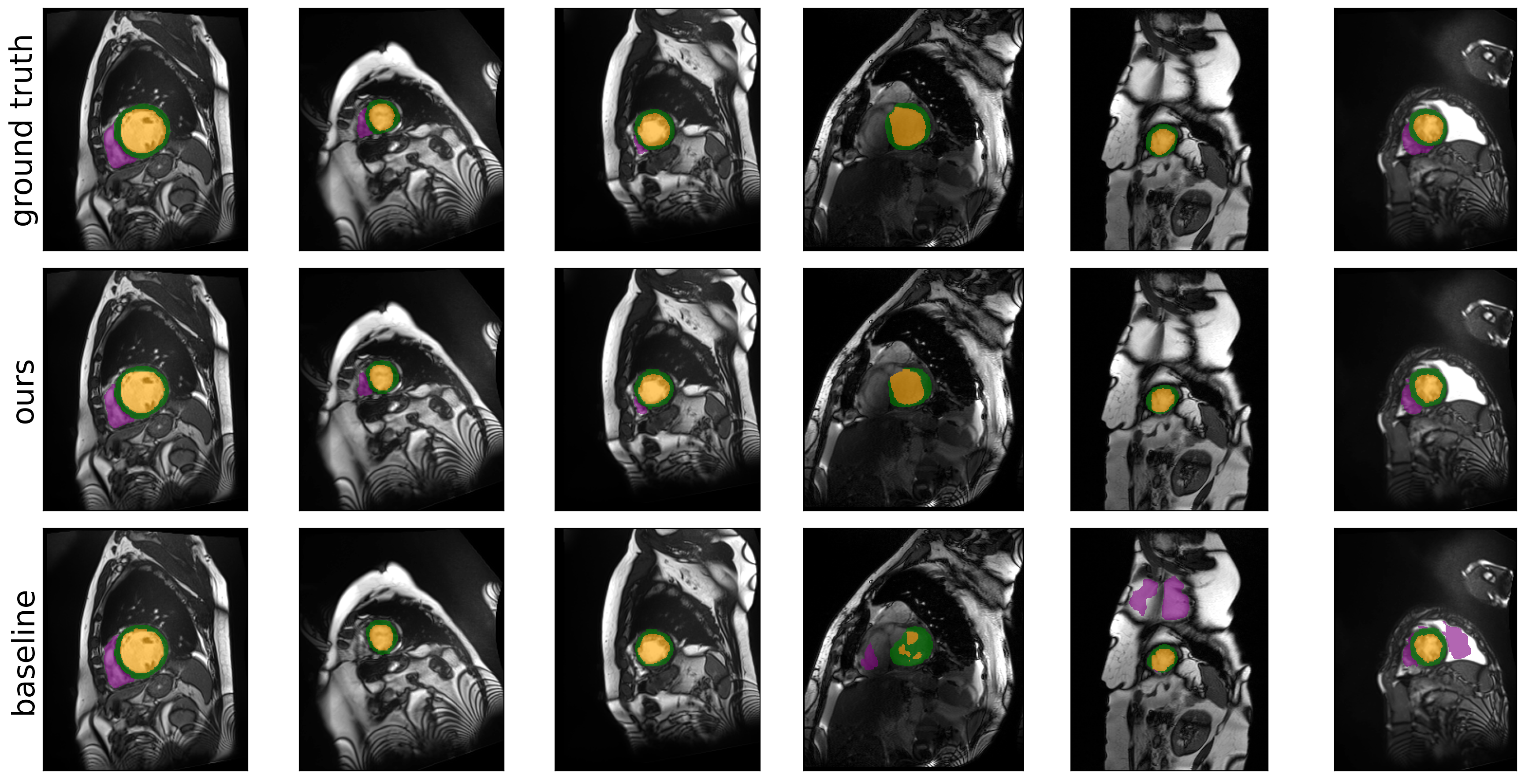}
    \caption{Visualization of segmentation results on ACDC dataset. From top to bottom: ground truth, ours, and baseline method.}
    \label{fig:acdc}
\end{figure}

\subsection{Visualization of Segmentation on Synapse with Distributional Shift} 
\Cref{fig:vis_synapse_half_slice_sparse} visualizes the segmentation predictions on $6$ Synapse test slices made by models trained via \ours(ours) and via the baseline (ERM+SGD) with TransUNet~\citep{chen2021transunet} on the \b{half-sparse} subset of the Synapse training set. We observe that, although the segmentation performances of both the baseline and \ours are compromised by the extreme scarcity of label-dense samples and the severe distributional shift, \ours provides more accurate predictions on the relative positions of organs, as well as less misclassification of organs (\eg, the baseline tends to misclassify other organs and the background as the left kidney). Nevertheless, due to the scarcity of labels, both the model trained with \ours and that trained with the baseline fail to make good predictions on the segmentation boundaries.

\begin{figure}[ht]
    \centering
    \includegraphics[width=.7\linewidth]{figs/synpase_legend.png}
    \includegraphics[width=.7\linewidth]{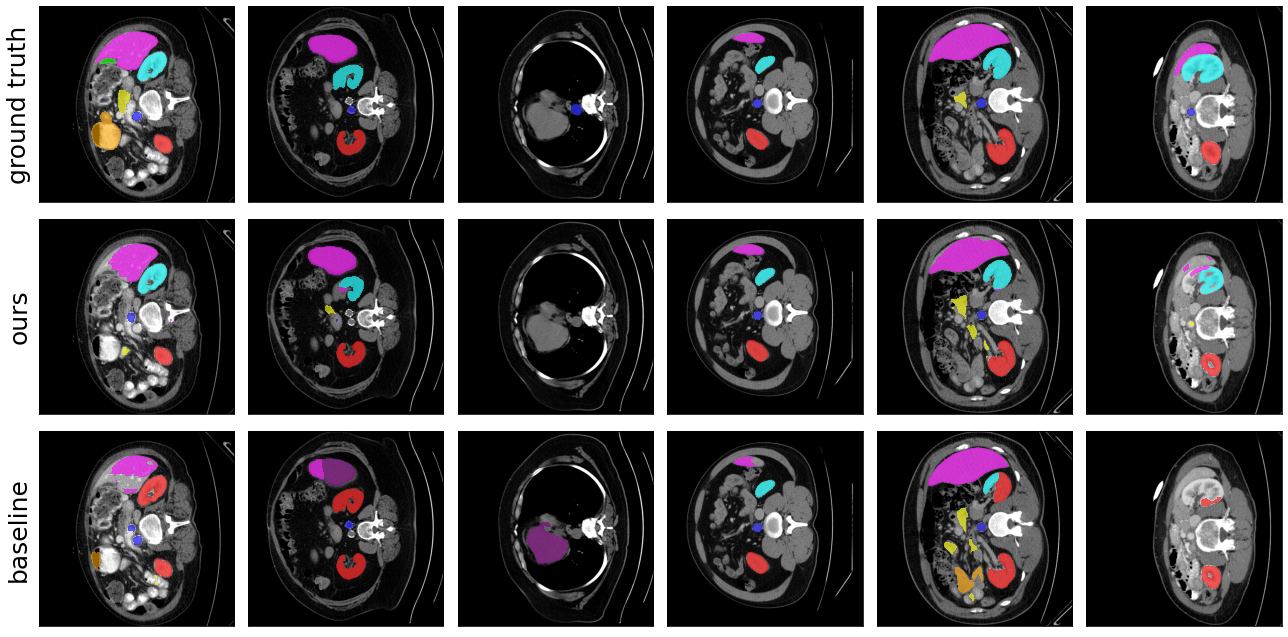}
    \caption{Visualization of segmentation predictions made by models trained via \ours(ours) and via the baseline (ERM+SGD) with TransUNet~\citep{chen2021transunet} on the \b{half-sparse} subset of the Synapse training set. Top to bottom: ground truth, ours (\ours), baseline.}
    \label{fig:vis_synapse_half_slice_sparse}
\end{figure}

\subsection{Experimental Results on Previous Metrics}
In this section, we include the results of experiments on Synapse\footnote{Note that the numbers of correct metrics and metrics in TransUNet~\citep{chen2021transunet} on ACDC dataset are the same.} dataset with metrics defined in TransUNet~\citep{chen2021transunet} for reference. In TransUNet~\citep{chen2021transunet}, DSC is 1 when the sum of ground truth labels is zero (i.e., gt.sum() == 0) while the sum of predicted labels is nonzero (i.e., pred.sum() $>$ 0). However, according to the definition of dice scores, $DSC=2|A \cap B| / (|A|+|B|), \forall A, B$, the DSC for the above case should be 0 since the intersection is 0 and the denominator is non-zero. In our evaluation, we change the special condition for DSC as 1 to pred.sum == 0 and gt.sum() == 0 instead, in which case the denominator is 0.

\begin{table}[ht]
    \caption{\ours with TransUNet trained on the full Synapse and its subsets, measured by metrics in TransUNet~\citep{chen2021transunet}.}
    \centering
    \begin{adjustbox}{width=1\textwidth}  
    \begin{tabular}{ll|cc|cccccccc}
    \toprule
    Training & Method &  DSC $\uparrow$ & HD95 $\downarrow$ & Aorta & Gallbladder & Kidney (L) & Kidney (R) & Liver & Pancreas & Spleen & Stomach 
    \\
    \midrule
    \multirow{2}{*}{full} & baseline &	77.32 &	29.23 &	87.46 &	63.54 &	82.06 &	77.76 &	94.10 &	54.06 &	85.07 &	74.54 
    \\
    & \ours &	80.16 &	25.79 &	87.23 &	63.27 &	84.58 &	81.69 &	94.62 &	58.29 &	90.63 &	81.01 
    \\
    \midrule
    \multirow{2}{*}{half-slice} & baseline & 76.24 &	24.66 &	86.26 &	57.61 &	79.32 &	76.55 &	94.34 &	54.04 &	86.20 &	75.57 
    \\
    & \ours &	78.14 &	29.75 &	86.66 &	62.28 &	81.36 &	78.84 &	94.60 &	57.95 &	85.38 &	78.01 
    \\
    \midrule
    \multirow{2}{*}{half-vol} & baseline &	72.65 &	35.86 &	83.29 &	43.70 &	78.25 &	77.25 &	92.92 &	51.32 &	83.80 &	70.66 
    \\
    & \ours &	75.93 &	34.95 &	84.45 &	60.40 &	79.59 &	76.06 &	93.19 &	54.46 &	84.91 &	74.37 
    \\
    \midrule
    \multirow{2}{*}{half-sparse} & baseline &	0.00 &	0.00 &	0.00 &	0.00 &	0.00 &	0.00 &	0.00 &	0.00 &	0.00 &	0.00 \\
    & \ours &	39.68 &	80.93 &	76.59 &	0.00 &	66.53 &	62.11 &	49.69 &	31.09 &	12.30 &	19.11 \\
    \bottomrule
    \end{tabular}
    \end{adjustbox}
    \label{tab:synapse_sample_eff_prev}
\end{table}

\begin{table}[ht]
    \centering
        \caption{AdaWAC versus hard-thresholding algorithms with TransUNet on Synapse, measured by metrics in TransUNet~\citep{chen2021transunet}.}
    \begin{adjustbox}{width=1\textwidth}  
    \begin{tabular}{l|cc|cccccccc}
    \toprule
    Method &  DSC $\uparrow$ & HD95$\downarrow$ & Aorta & Gallbladder & Kidney (L) & Kidney (R) & Liver & Pancreas & Spleen & Stomach \\
    \midrule
    baseline &	77.32 &	29.23 &	87.46 &	63.54 &	82.06 &	77.76 &	94.10 &	54.06 &	85.07 &	74.54 \\
    trim-train & 77.05 & 26.94 &  86.70 &	60.65 &	80.02 &	76.64 &	94.25 &	54.20 &	86.44 &	77.49 \\
    trim-ratio &	75.30 &	28.59 &	87.35 &	57.29 &	78.70 &	72.22 &	94.18 &	52.32 &	86.31 &	74.03 \\
    \midrule
    trim-train+ACR &	76.70 &	35.06 &	87.11 &	62.22 &	74.19 &	75.25 &	92.19 &	57.16 &	88.21 &	77.30 \\
    trim-ratio+ACR &	79.02 &	33.59 &	86.82 &	61.67 &	83.52 &	81.22 &	94.07 &	59.06 &	88.08 &	77.71 \\
    \ours (ours) &	80.16 &	25.79 &	87.23 &	63.27 &	84.58 &	81.69 &	94.62 &	58.29 &	90.63 &	81.01 \\
    \bottomrule
    \end{tabular}
    \end{adjustbox}
    \label{tab:trim_prev}
\end{table}

\begin{table}[ht]
    \caption{Ablation study of AdaWAC with TransUNet trained on Synapse, measured by metrics in TransUNet~\citep{chen2021transunet}.}
    \centering
    \begin{adjustbox}{width=1\textwidth}  
    \begin{tabular}{l|cc|cccccccc}
    \toprule
    Method &  DSC $\uparrow$ & HD95$\downarrow$ & Aorta & Gallbladder & Kidney (L) & Kidney (R) & Liver & Pancreas & Spleen & Stomach \\
    \midrule
    baseline &	77.32 &	29.23 &	87.46 &	63.54 &	82.06 &	77.76 &	94.10 &	54.06 &	85.07 &	74.54 \\
    reweight-only & 77.72 &	29.24 &	86.15 &	62.31 &	82.96 &	80.28 &	93.42 &	55.86 &	85.29 &	75.49 \\
    ACR-only &	78.93 &	31.65 &	87.96 &	62.67 &	81.79 &	80.21 &	94.52 &	60.41 &	88.07 &	75.83 \\
    \ours-0.01 &	78.98 &	27.81 &	87.58 &	61.09 &	82.29 &	80.22 &	94.90 &	55.92 &	91.63 &	78.23 \\
    \ours-1.0 &	80.16 &	25.79 &	87.23 &	63.27 &	84.58 &	81.69 &	94.62 &	58.29 &	90.63 &	81.01 \\
    \bottomrule
    \end{tabular}
    \end{adjustbox}
    \label{tab:synapse_prev}
\end{table}

\end{document}